\documentclass{article}

\PassOptionsToPackage{numbers,compress}{natbib}
\usepackage[final]{neurips_2026}

\usepackage[utf8]{inputenc} 
\usepackage[T1]{fontenc}    
\usepackage{hyperref}       
\usepackage{url}            
\usepackage{booktabs}       
\usepackage{amsfonts}       
\usepackage{nicefrac}       
\usepackage{microtype}      
\usepackage{xcolor}         
\usepackage{graphicx}
\usepackage{float}
\usepackage{wrapfig}
\usepackage{subcaption}
\usepackage{multirow}
\usepackage{amsmath}
\usepackage{amssymb}
\usepackage{mathtools}
\usepackage{amsthm}
\usepackage{pgfplots}
\pgfplotsset{compat=1.18}
\definecolor{npgblue}{RGB}{60,84,136}
\definecolor{npggreen}{RGB}{0,160,135}
\definecolor{npgred}{RGB}{230,75,53}

\makeatletter

\expandafter\let\csname algorithm*\endcsname\relax
\expandafter\let\csname endalgorithm*\endcsname\relax
\makeatother
\usepackage[ruled,vlined]{algorithm2e}

\usepackage[capitalize,noabbrev]{cleveref}

\theoremstyle{plain}
\newtheorem{theorem}{Theorem}[section]
\newtheorem{proposition}[theorem]{Proposition}
\newtheorem{lemma}[theorem]{Lemma}

\theoremstyle{definition}

\theoremstyle{remark}

\usepackage[textsize=tiny]{todonotes}

\title{HoReN: Normalized Hopfield Retrieval for Large-Scale Sequential Model Editing}

\author{%
  Yuan Fang\textsuperscript{1} \quad
  Yi Xie\textsuperscript{2} \quad
  Xuming Ran\textsuperscript{3}\thanks{Corresponding author: \href{mailto:ranxuming@gmail.com}{ranxuming@gmail.com}} \\[0.3em]
  \textsuperscript{1}IXL Learning, Inc \quad
  \textsuperscript{2}Technical University of Munich \quad
  \textsuperscript{3}National University of Singapore
}

\makeatletter\def\@trackname{}\makeatother
\begin{document}

\maketitle


\begin{abstract}
Large language models encode vast factual knowledge that can become outdated or incorrect after deployment, yet retraining is prohibitively costly. This motivates lifelong model editing, which updates targeted behavior while preserving the rest of the model. Existing editors, both parameter-modifying and parameter-preserving, degrade severely as edits accumulate and struggle to generalize across paraphrases. We propose \textbf{HoReN}, a codebook-based parameter-preserving editor that wraps a single MLP layer with a discrete key--value memory. HoReN treats each codebook entry as both a knowledge key and a Hopfield stored pattern, retrieves edits by angular similarity on the unit hypersphere, and refines queries through damped Hopfield dynamics so paraphrases converge to the correct memory basin while unrelated inputs remain stable. HoReN achieves strong editing performance with consistent gains across diverse benchmarks spanning standard ZsRE, structured WikiBigEdit, and unstructured UnKE evaluations. Moreover, HoReN scales to 50K sequential edits on ZsRE with stable overall performance above 0.93, while prior editors collapse or degrade severely before reaching 10K. Our code is available at \url{https://github.com/ha11ucin8/HoReN}.
\end{abstract}

\section{Introduction}
\label{sec:introduction}
Large language models (LLMs)~\citep{Dubey2024TheL3,Yang2024Qwen25TR,DeepSeekAI2025DeepSeekR1IR,Agarwal2025gptoss120bgptoss20bMC} are routinely deployed against a world whose facts drift, whose deployments surface hallucinations, and whose operators need to inject corrections that were absent at pre-training time. Re-training end-to-end for every update is prohibitively expensive and risks catastrophic forgetting~\citep{kirkpatrick2017overcoming,Ran2024BraininspiredCP,Xue2025DistillationGuidedST,Luo2026KeyValuePC}, motivating \emph{model editing}: targeted procedures that revise an LLM's behavior on a specified prompt while leaving its other behaviors intact, with edits arriving as a continual stream~\citep{Meng2022LocatingAE,hartvigsen2023aging,Wang2024WISERT,wang2025repair,Luo2026ReversibleLM}. Existing editors fall into two paradigms: \textbf{parameter-modifying} methods alter base weights, including the locate-and-edit line (ROME~\citep{Meng2022LocatingAE}, MEMIT~\citep{Meng2022MassEditingMI}, AlphaEdit~\citep{Fang2024AlphaEditNC}) and one-shot direct shifts (UltraEdit~\citep{Gu2025UltraEditTS}); \textbf{parameter-preserving} methods leave the base weights frozen and route edits through external memory at a chosen layer, as in GRACE's discrete codebook~\citep{hartvigsen2023aging} and WISE's gated side-memory~\citep{Wang2024WISERT}.

\begin{figure}[t]
    \centering
    \includegraphics[width=\textwidth]{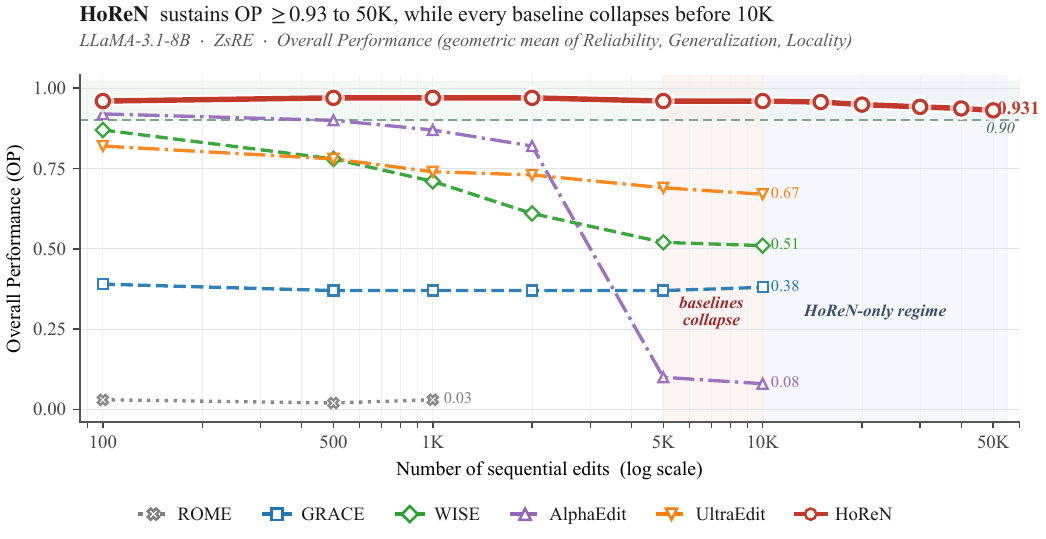}
    \caption{Overall Performance (OP, geometric mean of Reliability, Generalization, and Locality) scaling from $100$ to $50\text{K}$ sequential edits on ZsRE (LLaMA-3.1-8B). HoReN sustains OP $\geq 0.93$ through 50K edits, while every baseline collapses or plateaus before 10K: ROME fails immediately ($0.03$), GRACE flatlines at $0.37$--$0.38$, WISE drifts down to $0.51$, AlphaEdit cliffs from $0.82$ at 2K to $0.08$ at 10K, and UltraEdit drifts down to $0.67$. Beyond 10K only HoReN remains within the $0.90$ stability band. Per-metric breakdowns and the per-checkpoint table are in Appendix~\ref{app:llama-scaling}.}
    \label{fig:50k-scaling}
\end{figure}

Parameter-modifying editors have shown notable success on small batches of edits, but directly perturbing weights trained on trillions of tokens disrupts preserved knowledge as edits accumulate, and constraint-based remedies such as AlphaEdit's null-space projection~\citep{Fang2024AlphaEditNC} only partially mitigate this drift in a sequential regime (Figure~\ref{fig:50k-scaling}). Parameter-preserving editors take a different route. GRACE~\citep{hartvigsen2023aging} writes new mappings into a chosen layer's latent space as a discrete key--value codebook of adaptors and routes by a hard-radius nearest-neighbor lookup that defers to a stored value whenever an incoming activation falls within the radius of a stored key. WISE~\citep{Wang2024WISERT} maintains a dual parametric memory, a frozen main memory of pretrained weights and a trainable side memory holding edits with a router selecting between them, and uses a knowledge-sharding mechanism that places distinct edit subsets in disjoint subspaces before merging them into the side memory. Both designs protect existing knowledge by construction, yet their generalization and overall performance degrade as the edit stream scales (Figure~\ref{fig:50k-scaling}). The failure is fundamentally a routing problem: the lookup or router cannot reliably map a paraphrased query to the correct stored edit.

To address these routing failures in parameter-preserving editors, we propose \textbf{HoReN} (\textbf{Ho}pfield \textbf{Re}trieval with \textbf{N}ormalized representations), which combines three components. (i) The \emph{codebook architecture} of GRACE~\citep{hartvigsen2023aging}: a frozen base model with one MLP layer wrapped by a discrete key--value memory, so each edit is stored once and never re-touched. (ii) \emph{Normalization}: keys and queries are projected onto the unit hypersphere so retrieval depends on angular similarity rather than activation magnitude. (iii) \emph{Hopfield dynamics}~\citep{Ramsauer2020HopfieldNI,Krotov2023ANF}: the codebook is read as a field of energy basins, and one damped attractor step before matching lets a paraphrase relax into the basin of the correct memory while leaving unrelated queries essentially undisturbed. The matched edit is realized by a lightweight value adaptor at the same layer $l$, so the entire intervention is concentrated on the routing decision.

The two novel components play complementary roles: normalization protects locality and enables generalization by keeping unrelated queries separated on the hypersphere, while one damped Hopfield step closes the generalization gap by pulling rephrased queries into the basin of the correct edit. The result is a sequential editor that scales without catastrophic degradation: on a controlled ZsRE stress test on LLaMA-3.1-8B, HoReN keeps reliability, generalization, and locality all above $89\%$ out to 50K accumulated edits (Figure~\ref{fig:50k-scaling}), where every baseline has cliffed (AlphaEdit, ROME), drifted (UltraEdit, WISE), or flat-lined on generalization (GRACE) well before 10K. The same advantage carries over to structured WikiBigEdit~\citep{Thede2025WikiBigEditUT} and unstructured UnKE~\citep{Deng2024UnKEUK}. Related works are shown in Appendix ~\ref{app:related-work}.

Our main contributions are:
\begin{itemize}
\item We characterize a structural divide in lifelong editing: parameter-modifying editors degrade preserved knowledge even with constraint measures, while parameter-preserving editors lose generalization because magnitude-sensitive nearest-neighbor routing cannot bridge the representation gap between an edit prompt and its rephrasings.
\item We propose HoReN, a parameter-preserving editor that unifies the ROME key--value memory view with the GRACE codebook architecture under a normalized Hopfield retrieval rule, and provide theoretical analysis showing that iterating Hopfield dynamics to convergence attracts all queries, including unrelated ones, toward stored codes, formally motivating HoReN's single-step deployment.
\item We demonstrate large-scale stability of HoReN on a 50K-edit ZsRE stress test on LLaMA-3.1-8B, where prior methods cliff, drift, or flat-line well before 10K edits, and confirm consistent gains on structured WikiBigEdit and unstructured UnKE across seven LLMs spanning four model families.
\end{itemize}

\begin{figure*}[t]
    \centering
    \includegraphics[width=\textwidth]{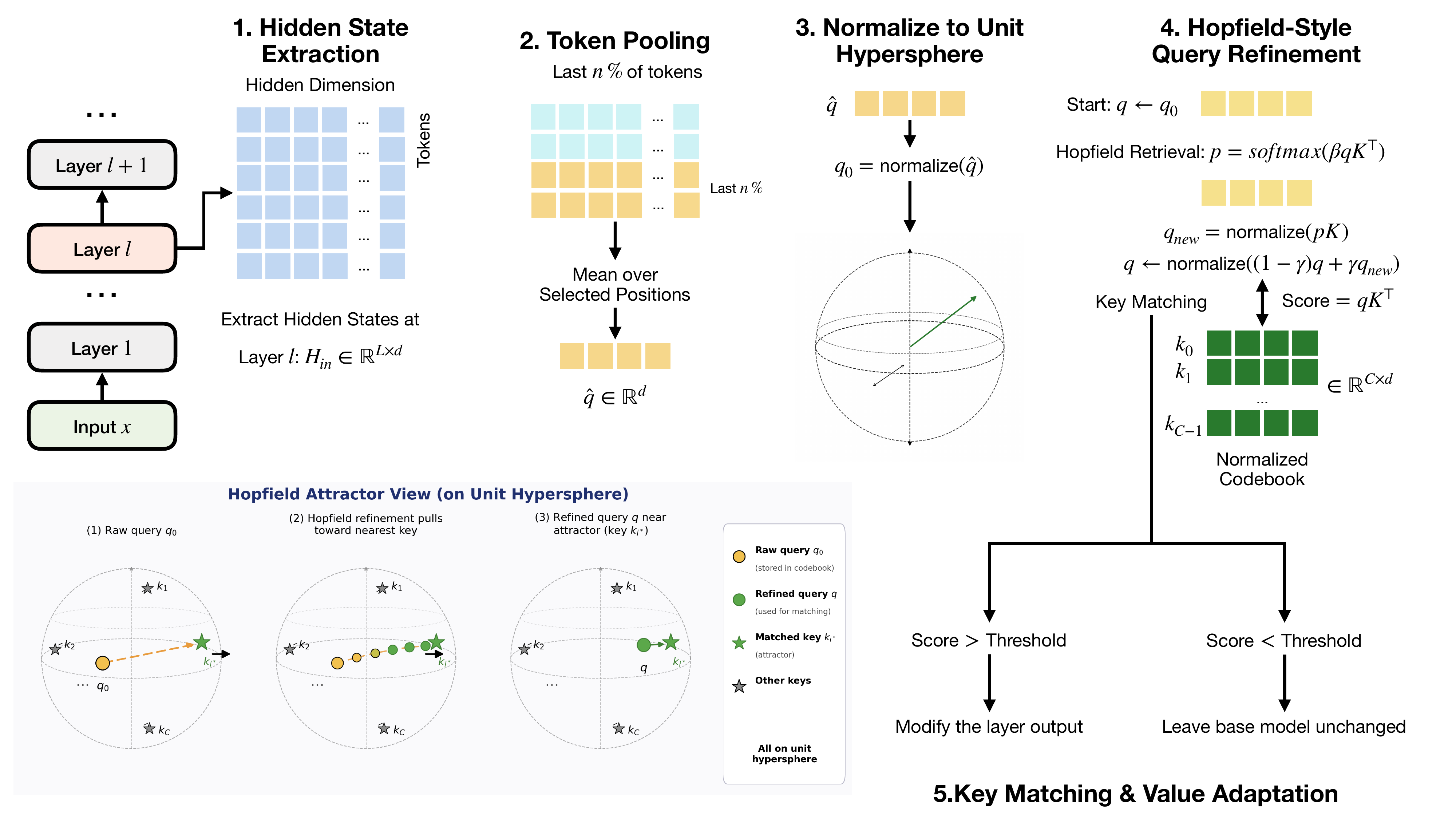}
    \caption{Overall architecture of HoReN combining normalized representations with Hopfield-style retrieval. The pipeline: \textbf{(1)} extract hidden states from layer $l$; \textbf{(2)} pool token representations to construct the query; \textbf{(3)} normalize query to the unit hypersphere; \textbf{(4)} apply one-step Hopfield-style refinement over the normalized codebook; \textbf{(5)} perform key matching and value adaptation at layer $l$. This design achieves complementary benefits: normalization preserves locality through angular similarity matching, while Hopfield-style refinement improves paraphrase routing by moving the query toward edit-key attractors.}
    \label{fig:overview}
\end{figure*}

\section{Method}
\label{sec:method}

HoReN inherits the architectural skeleton of GRACE~\citep{hartvigsen2023aging}: the base model $f_{\theta_0}$ is frozen and a single MLP layer $l$ is wrapped with a discrete codebook $\mathcal{C}=\{(k_i,v_i,y_i)\}_{i=1}^{C}$ (Figure~\ref{fig:overview}). At inference, an incoming query is routed through the codebook; a successful match modifies the layer's output by adding the matched value to the output, while an unmatched query passes through the base model unchanged. Editing reduces to two questions: \emph{what is stored} and \emph{how is a query routed to the right entry}. HoReN's answers to both flow from treating each stored key simultaneously as a ROME-style knowledge-memory index~\citep{Meng2022LocatingAE} and as an attractor basin in a modern Hopfield network~\citep{Ramsauer2020HopfieldNI}: keys and queries are projected onto the unit hypersphere so that retrieval is purely angular (Section~\ref{sec:codebook}), and routing proceeds by a single controlled relaxation under the codebook's attractor field before any matching decision (Section~\ref{sec:hopfield-formulation}).

\subsection{Problem Formulation}
\label{sec:problem-formulation}
\textbf{Setup.} Following the lifelong editing setup of~\citep{Wang2024WISERT}, let $f_{\theta}:\mathcal{X}\rightarrow\mathcal{Y}$ be an LLM with pre-edit parameters $\theta_0$. Each edit index $t$ carries an original prompt $x_t$, a paraphrase $\tilde{x}_t$, a shared target $y_t^*$, and an unrelated locality query $x_t^{\mathrm{loc}}$. At step $t$ the editor produces
\begin{equation}
f_{\theta_t}=\mathrm{ModelEditing}(f_{\theta_{t-1}},x_t,y_t^*),
\quad\text{so that}\quad
f_{\theta_t}(x)=
\begin{cases}
y_t^*, & x\in\{x_t,\tilde{x}_t\},\\
f_{\theta_0}(x), & x=x_t^{\mathrm{loc}}.
\end{cases}
\end{equation}
After $N$ sequential edits, the final model must satisfy this condition for \emph{all} $t\in\{1,\ldots,N\}$, measured by reliability, generalization, and locality (definitions and dataset-specific instantiations in Section~\ref{sec:expii-setup}). The difficulty of the problem grows with $N$: each new edit must coexist with every prior one, so the editor needs a memory that scales gracefully and a routing rule that remains discriminative even as nearby memories accumulate.

\subsection{The Codebook: Normalized Keys on the Unit Hypersphere}
\label{sec:codebook}
Each codebook entry $(k_i,v_i,y_i)$ at editing layer $l$ binds a key $k_i\in\mathbb{R}^d$ to a value vector $v_i\in\mathbb{R}^d$ and the target label $y_i$ that originally created the entry. Keys are constructed from the layer-$l$ hidden states of the prompt that triggered the edit, and queries are constructed in the same way at routing time, so that storage and retrieval live in a common representation space.

Concretely, given an input $x$ we extract $H_{\text{in}}\in\mathbb{R}^{L\times d}$ from the frozen base at layer $l$ and pool the last $n\%$ of token positions into a raw query $\hat q=\mathrm{select}(H_{\text{in}})\in\mathbb{R}^{d}$. Pooling a suffix of tokens, rather than a single position, summarises the semantic content of the prompt while remaining robust to surface variation such as padding and minor rephrasing. We then project to the unit hypersphere,
\[
q_0 = \mathrm{normalize}(\hat q)\in\mathbb{R}^{1\times d},\qquad K\in\mathbb{R}^{C\times d},\;\;\|k_i\|_2=1,
\]
where $C=|\mathcal{C}|\le N$ is the current codebook size. The motivation is the geometric fact that, for post-activation MLP tensors, the direction of the vector is what binds to a particular concept while the magnitude largely reflects prompt-dependent activation strength; normalising removes this nuisance variation and exposes the angular structure that the keys are actually selecting on.

Once a query has been routed (Section~\ref{sec:hopfield-formulation}), the matching decision itself is a single inner product: the editor scores keys as $\mathbf{a}=qK^\top$, takes $i^\star=\arg\max_i \mathbf{a}_i$, and accepts the match when $\mathbf{a}_{i^\star}>c$. Because keys and queries lie on the unit sphere, $c$ is a cosine threshold and admits a precise geometric reading: it is the angular boundary of the attractor basin around $k_{i^\star}$. This replaces GRACE's deferral radius, a Euclidean tolerance whose meaning shifts with activation magnitude, with a single, scale-free notion of what it means for a query to fall ``inside'' a stored memory.

\subsection{Querying via Hopfield Attractor Dynamics}
\label{sec:hopfield-formulation}
Treating each stored key as an attractor suggests that routing should not be a one-shot nearest-neighbour lookup against the raw query $q_0$. A paraphrase $\tilde x_t$ produces an activation that is angularly close to, but not identical to, the activation of the original prompt $x_t$; under a hard argmax this small angular gap can be enough to send the query to the wrong key once the codebook is dense. The right object to compare against the codebook is therefore not $q_0$ itself but the result of letting $q_0$ relax briefly under the codebook's attractor field, so that paraphrases drift toward the basin of the correct memory while unrelated queries are left essentially untouched.

We make this precise using the modern Hopfield framework~\citep{Ramsauer2020HopfieldNI}. With energy
\[
E(q,K) \;=\; \tfrac12\|q\|_2^2 - \mathrm{lse}_\beta\!\big(qK^\top\big),\qquad \mathrm{lse}_\beta(\mathbf{z})=\tfrac{1}{\beta}\log\textstyle\sum_i e^{\beta z_i},
\]
the standard update $T(q)=\mathrm{softmax}(\beta\, qK^\top)\,K$ descends $E$ and converges to fixed points that lie in the convex hull of the stored patterns, with $\beta>0$ controlling retrieval sharpness. HoReN deploys a \emph{normalized, damped} variant of this dynamic. Starting from $q\leftarrow q_0$, for $m=1,\ldots,M$ we compute
\begin{equation}
\begin{aligned}
p &= \mathrm{softmax}(\beta\, qK^\top) \in \mathbb{R}^{1\times C}, \\
q_{\text{new}} &= \mathrm{normalize}(pK) \in \mathbb{R}^{1\times d},
\end{aligned}
\end{equation}
and update
\begin{equation}
q \;\leftarrow\; \mathrm{normalize}\!\big((1-\gamma)\, q + \gamma\, q_{\text{new}}\big),
\end{equation}
terminating early once $\|q_{\text{new}}-q\|_2\le\epsilon$. Two deviations from the standard update matter. The intermediate re-normalisation $q_{\text{new}}=\mathrm{normalize}(pK)$ keeps the iterate on the same hypersphere as the keys, so the dynamics never leave the geometry on which the codebook is defined. The damping factor $\gamma\in(0,1]$ moves the query only fractionally toward $q_{\text{new}}$ at each step, so unrelated queries, whose attention weights $p$ are diffuse, are perturbed only slightly, while paraphrases of an existing edit, whose $p$ concentrates on a single key, are pulled into the corresponding basin.

The single damped step is not an empirical convenience but a principled stopping criterion. Iterating $T$ to convergence would attract \emph{any} query, including unrelated or genuinely novel ones, into the convex hull of stored patterns, which for large $\beta$ collapses near the closest existing key and therefore destroys locality. We characterise this behaviour with two complementary results, stated here and proved in Appendix~\ref{sec:proofs}.

\begin{theorem}[Asymptotic convergence of standard Hopfield retrieval]
\label{thm:hopfield_asymptotic_convergence}
Let $K \in \mathbb{R}^{C \times d}$ be a normalized codebook ($\|k_i\|_2 = 1$), let $q^{(0)} \in \mathbb{R}^{1 \times d}$ satisfy $\|q^{(0)}\|_2 = 1$, and let $\beta>0$. Define the standard Hopfield update map $T(q):=\mathrm{softmax}(\beta\, q K^\top)\,K$ and generate the infinite sequence $q^{(s+1)}=T(q^{(s)})$. With $F(q):=\tfrac{1}{\beta}\log\sum_{i=1}^{C}\exp(\beta\, q k_i^\top)$ and energy $E(q,K):=\tfrac12\|q\|_2^2-F(q)$, the energy is monotonically non-increasing,
\[
E(q^{(s+1)},K)-E(q^{(s)},K)\;\le\;-\tfrac12\,\|q^{(s+1)}-q^{(s)}\|_2^2,
\]
and the sequence converges to a fixed point: $q^{(s)}\to q^\ast$ with $q^\ast=T(q^\ast)$.
\end{theorem}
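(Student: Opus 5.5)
The plan is the concave--convex procedure (CCCP) reading of the update, as in Ramsauer et al. Write $E=E_1-F$ with $E_1(q)=\tfrac12\|q\|_2^2$ strongly convex and $F$ convex, since log-sum-exp composed with a linear map is convex. First check that $\nabla F(q)=\mathrm{softmax}(\beta qK^\top)K=T(q)$, so $T$ is exactly the CCCP step $\nabla E_1(q^{(s+1)})=\nabla F(q^{(s)})$.

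For the descent inequality, apply convexity of $F$ at $q^{(s)}$:
\[
F(q^{(s+1)})\ge F(q^{(s)})+\langle q^{(s+1)},\,q^{(s+1)}-q^{(s)}\rangle .
\]
Then expand the quadratic term exactly:
\[
\tfrac12\|q^{(s+1)}\|^2-\tfrac12\|q^{(s)}\|^2=\langle q^{(s+1)},q^{(s+1)}-q^{(s)}\rangle-\tfrac12\|q^{(s+1)}-q^{(s)}\|^2 .
\]
Subtracting the two displays gives the claimed bound $E(q^{(s+1)})-E(q^{(s)})\le-\tfrac12\|q^{(s+1)}-q^{(s)}\|_2^2$.

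For convergence, note first that for $s\ge1$ every iterate lies in the convex hull $\mathrm{conv}\{k_i\}$, which sits inside the unit ball. Hence $E$ is bounded below on the trajectory, for instance by $-F(q)\ge -1-\tfrac{\log C}{\beta}$. Telescoping the descent inequality gives $\sum_s\|q^{(s+1)}-q^{(s)}\|^2<\infty$, so successive differences tend to $0$. The trajectory is bounded, so it has limit points. $T$ is continuous, so every limit point $q^*$ satisfies $T(q^*)=q^*$, using $\|T(q^{(s)})-q^{(s)}\|\to0$. In addition, $E$ takes the same value at every limit point, because $E(q^{(s)})$ is monotone and bounded and therefore converges.

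The main obstacle is upgrading "limit points are fixed points" to convergence of the whole sequence, since fixed points need not be isolated. I would first try a standard result: the set of limit points of a bounded sequence with $\|q^{(s+1)}-q^{(s)}\|\to0$ is compact and connected. If the fixed points in the level set of $E$ are isolated, this forces a single limit. For the general case I would invoke the Kurdyka--Łojasiewicz property, which holds because $E$ is real-analytic (it is built from exp, log and polynomials). Combined with the sufficient-decrease inequality above and a relative-error bound $\|\nabla E(q^{(s+1)})\|=\|T(q^{(s+1)})-T(q^{(s)})\|\le L\|q^{(s+1)}-q^{(s)}\|$, where $T$ is Lipschitz with constant $L\le\beta$ because $K$ has unit-norm rows, the Attouch--Bolte framework gives finite length $\sum_s\|q^{(s+1)}-q^{(s)}\|<\infty$. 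Finite length implies $q^{(s)}$ is Cauchy, so it converges to some $q^*$, and $q^*=T(q^*)$ by continuity of $T$.
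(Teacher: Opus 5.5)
Your proposal is correct and follows the paper's proof essentially step for step. The shared steps are: descent from convexity of $F$ with $\nabla F=T$, boundedness of the iterates in the unit ball, square-summable increments, limit points being fixed points by continuity of $T$, and the Kurdyka--\L{}ojasiewicz property of the real-analytic $E$ to obtain finite length. The only difference is cosmetic. In the KL step the paper uses the exact identity $\nabla E(q^{(s)})=q^{(s)}-q^{(s+1)}$ and so needs no Lipschitz bound on $T$, whereas you invoke the Attouch--Bolte relative-error condition $\|\nabla E(q^{(s+1)})\|\le\beta\|q^{(s+1)}-q^{(s)}\|$, which is also valid.
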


\begin{proposition}[Finite-step descent and residual bound]
\label{prop:hopfield_finite_step}
Under the assumptions of Theorem~\ref{thm:hopfield_asymptotic_convergence}, for any finite $M\ge 1$ the truncated iterates $q^{(s+1)}=T(q^{(s)})$, $s=0,\ldots,M-1$, satisfy the cumulative descent inequality
\[
\sum_{s=0}^{M-1}\|q^{(s+1)}-q^{(s)}\|_2^2 \;\le\; 2\bigl(E(q^{(0)},K)-E(q^{(M)},K)\bigr),
\]
and consequently the iterate with smallest fixed-point residual obeys the explicit bound
\[
\min_{0\le s<M}\,\|T(q^{(s)})-q^{(s)}\|_2 \;\le\; \frac{2}{\sqrt{M}}.
\]
The bound applies to the \emph{best} of the first $M$ iterates; the last iterate $q^{(M)}$ is only guaranteed to satisfy $E(q^{(M)},K)\le E(q^{(0)},K)$ and may not itself be an approximate fixed point.
\end{proposition}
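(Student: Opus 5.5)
The plan is to telescope the one-step descent inequality of Theorem~\ref{thm:hopfield_asymptotic_convergence}, bound the resulting energy gap by an absolute constant, and finish with a pigeonhole (minimum-versus-average) argument. The first step is immediate. Summing $E(q^{(s+1)},K)-E(q^{(s)},K)\le -\tfrac12\|q^{(s+1)}-q^{(s)}\|_2^2$ over $s=0,\ldots,M-1$ telescopes the left-hand side to $E(q^{(M)},K)-E(q^{(0)},K)$. Multiplying by $-2$ gives the cumulative descent inequality exactly as stated.

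For the residual bound, note that $T(q^{(s)})-q^{(s)}=q^{(s+1)}-q^{(s)}$. The minimum of the $M$ squared residuals is therefore at most their average, which is at most $\tfrac{2}{M}\bigl(E(q^{(0)},K)-E(q^{(M)},K)\bigr)$. It thus suffices to show
\[
E(q^{(0)},K)-E(q^{(M)},K)\le 2,
\]
after which taking square roots yields $2/\sqrt{M}$. I would use two structural facts.

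\textbf{Fact 1.} For any $q$, $T(q)=\sum_i p_i k_i$ is a convex combination of unit vectors, so $\|T(q)\|_2\le 1$. In particular $\|q^{(M)}\|_2\le 1$ for $M\ge1$.

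\textbf{Fact 2.} $F$ is convex with $\nabla F(q)=T(q)$, since $F$ is a log-sum-exp of linear functions and its gradient is the softmax-weighted average of the keys.

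Write $r=\|q^{(M)}\|_2\in[0,1]$ and use $\|q^{(0)}\|_2=1$. Then
\[
E(q^{(0)},K)-E(q^{(M)},K)=\tfrac12-\tfrac12 r^2+F(q^{(M)})-F(q^{(0)}).
\]
The main obstacle is bounding the $F$-difference tightly enough to reach the constant $2$. A naive $1$-Lipschitz bound gives $F(q^{(M)})-F(q^{(0)})\le\|q^{(M)}-q^{(0)}\|_2\le 2$, so the total is at most $5/2$ and the residual bound only becomes $\sqrt{5/M}$.

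Instead I would use the gradient inequality for convex $F$ at the point $q^{(M)}$:
\[
F(q^{(M)})-F(q^{(0)})\le\langle T(q^{(M)}),\,q^{(M)}-q^{(0)}\rangle\le \|T(q^{(M)})\|_2\,\bigl(r+1\bigr)\le r+1,
\]
where the last two inequalities use Cauchy--Schwarz, the triangle inequality, and Fact 1. Substituting gives
\[
E(q^{(0)},K)-E(q^{(M)},K)\le \tfrac12-\tfrac12 r^2+r+1 .
\]
The right-hand side is increasing in $r$ on $[0,1]$, so it is maximized at $r=1$, where it equals exactly $2$. This is the required constant.

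Finally, the caveat about the last iterate needs no separate argument. Monotonicity of $E$ gives $E(q^{(M)},K)\le E(q^{(0)},K)$, but nothing in the argument forces the residual at $s=M-1$ to be the smallest one. I would remark that the pigeonhole step only certifies some index $s<M$, which is why the bound is stated for the best iterate.
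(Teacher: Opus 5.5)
Your proof is correct and follows the paper's argument: telescope the per-step descent inequality, bound the energy gap by $2$, and use minimum-versus-average to get $2/\sqrt{M}$. The only difference is how you get the constant. You bound the trajectory gap $E(q^{(0)},K)-E(q^{(M)},K)$ directly via the gradient inequality for $F$ at $q^{(M)}$. The paper instead bounds $E(q^{(0)},K)-E_{\inf}$ using the global floor $E\ge-\tfrac12-\tfrac{\log C}{\beta}$ and $F(q^{(0)})\ge -1+\tfrac{\log C}{\beta}$. Both reduce to $\tfrac32+r-\tfrac12 r^2\le 2$. Note that the $1$-Lipschitz route you dismissed also reaches $2$ if you keep $\|q^{(M)}-q^{(0)}\|_2\le r+1$ instead of bounding it by $2$; the $r$-dependence, not convexity, is what fixes the constant.
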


Theorem~\ref{thm:hopfield_asymptotic_convergence} formalises the over-attraction risk: in the limit, every query is dragged into the convex hull of stored codes regardless of whether it actually corresponds to an edit. Proposition~\ref{prop:hopfield_finite_step} shows that this contraction is already operative at finite step counts in an averaged sense, so even partial iteration progressively erodes locality. The normalized, damped single-step variant deployed by HoReN preserves the generalisation benefit of attractor dynamics, and paraphrases are pulled toward the right basin, while controlling exactly this over-attraction; $M$ then exposes a principled generalisation/locality trade-off whose empirical behaviour we study in Section~\ref{sec:expii-analysis-ablation} (Figure~\ref{fig:hopfield-steps}, Table~\ref{tab:normalized-codebook}).

\subsection{Updating the Codebook}
\label{sec:codebook-update}
The codebook grows online as edits arrive. After routing, three cases can arise. (i) No-match: the best score $\mathbf{a}_{i^\star}\le c$ falls outside every existing basin, so the edit is genuinely new and we insert a fresh entry $(q_0, v_{\text{new}}, y_t^*)$, where $v_{\text{new}}$ is trained by cross-entropy on the target $y_t^*$ with the base model frozen. (ii) Match with consistent label ($y_{i^\star}=y_t^*$): the edit is a paraphrase or a reassertion, and we fine-tune the existing value adaptor $v_{i^\star}$ on the new prompt while keeping its key $k_{i^\star}$ fixed. (iii) Match with conflicting label ($y_{i^\star}\neq y_t^*$): the basin is contested and we insert a new entry to preserve the new fact, leaving the prior memory in place. At inference, a successful match adds the matched value to the layer output at the index of the last prompt token to be passed to the next layer; on a no-match the base model is left unchanged. An alternative LoRA-based adaptor is described in Appendix~\ref{app:lora-vs-value}, and the full procedure is given in Algorithm~\ref{alg:hopfield_edit_simple} (Appendix~\ref{app:algorithm}).

\paragraph{Routing mechanism.} The routing mechanism has a key structural asymmetry: stored keys are always the \emph{pre-refinement} query $q_0$, while the routing comparison uses the refined $q$ output by the Hopfield step. This asymmetry is not a tuning choice but follows directly from the attractor geometry. If newly stored keys were the output of the Hopfield refinement, each insertion would pull subsequent insertions toward existing attractors, basins would co-move with the codebook, and the angular separation that retrieval depends on would progressively collapse. Pinning storage to $q_0$ anchors the codebook to the model's native representational geometry, so the attractor field is fixed by what the base model actually produces; the refinement step then moves paraphrases toward the correct basin at routing time without contaminating what is stored.

\section{Experiments}
\label{sec:experiments}

\subsection{Experimental Setup}
\label{sec:expii-setup}

We briefly outline the base LLMs, baselines, datasets, and evaluation metrics. Full setup descriptions and implementation details are in Appendix~\ref{app:experimental-details}.

\paragraph{Base LLMs \& Baseline Methods.} Our experiments are conducted on seven LLMs spanning 1.5B to 32B parameters across five families: DeepSeek-R1-Distill-Qwen-1.5B~\citep{DeepSeekAI2025DeepSeekR1IR}, LLaMA-3-8B-Instruct and LLaMA-3.1-8B-Instruct~\citep{Dubey2024TheL3}, Qwen2.5-7B-Instruct~\citep{Yang2024Qwen25TR}, DeepSeek-R1-Distill-Llama-8B (hereafter DeepSeek-R1-8B)~\citep{DeepSeekAI2025DeepSeekR1IR}, GPT-OSS-20B~\citep{Agarwal2025gptoss120bgptoss20bMC}, and Qwen-SEA-LION-v4-32B-IT~\citep{Ng2025SEALIONSA}. We compare HoReN against representative methods from two paradigms: (1) \emph{parameter-modifying} methods, including ROME~\citep{Meng2022LocatingAE}, AlphaEdit~\citep{Fang2024AlphaEditNC}, and UltraEdit~\citep{Gu2025UltraEditTS}; and (2) \emph{parameter-preserving} methods, including GRACE~\citep{hartvigsen2023aging}, WISE~\citep{Wang2024WISERT}, and REPAIR~\citep{wang2025repair}. For unstructured editing we additionally compare against the domain-specific UnKE baseline~\citep{Deng2024UnKEUK}.

\paragraph{Datasets \& Evaluation Metrics.} We evaluate HoReN on three benchmarks that progressively stress its core separation guarantee: \textbf{ZsRE}~\citep{levy2017zero}, the standard lifelong editing benchmark with semantically well-separated edit and locality queries; \textbf{WikiBigEdit}~\citep{Thede2025WikiBigEditUT}, a harder regime in which locality queries share subjects and relations with edited facts; and \textbf{UnKE}~\citep{Deng2024UnKEUK}, an unstructured benchmark with free-form edit targets. Following prior work~\citep{hartvigsen2023aging,Wang2024WISERT,wang2025repair}, we report \textbf{Reliability}, \textbf{Generalization}, and \textbf{Locality}, together with their geometric mean (Overall Performance, OP). For UnKE we follow its original protocol~\citep{Deng2024UnKEUK} and report BLEU, ROUGE-1/2/L and BERTScore. Full dataset and metric definitions are in Appendices~\ref{app:datasets} and~\ref{app:metrics}.

\subsection{ZsRE Baseline Failure Modes and Large-Scale Stability}
\label{sec:expii-main-results}

At $N{=}1000$, every baseline exhibits at least one critical failure mode on ZsRE; Tables~\ref{tab:zsre-comparison}--\ref{tab:scaling-10k} and Figure~\ref{fig:50k-scaling} show that these failures either accelerate (cliff) or persist (slope) as edits scale to 50K, while HoReN maintains all three metrics throughout.

\subsubsection{Baseline Comparison on ZsRE}
\label{sec:expii-zsre}

\begin{table*}[t]\scriptsize
\centering
\caption{Performance comparison on ZsRE benchmark at different editing scales ($N \in \{100, 500, 1000\}$) on four models.}
\label{tab:zsre-comparison}
\resizebox{\textwidth}{!}{%
\begin{tabular}{ll|cccc|cccc|cccc}
\toprule
\textbf{Model} & \textbf{Method}
& \multicolumn{4}{c|}{\textbf{$N = 100$}}
& \multicolumn{4}{c|}{\textbf{$N = 500$}}
& \multicolumn{4}{c}{\textbf{$N = 1000$}} \\
\cmidrule(lr){3-6} \cmidrule(lr){7-10} \cmidrule(lr){11-14}
&
& \textbf{Rel.} & \textbf{Gen.} & \textbf{Loc.} & \textbf{OP}
& \textbf{Rel.} & \textbf{Gen.} & \textbf{Loc.} & \textbf{OP}
& \textbf{Rel.} & \textbf{Gen.} & \textbf{Loc.} & \textbf{OP} \\
\midrule
\multirow{6}{*}{\begin{tabular}[c]{@{}l@{}}LLaMA-3\\8B\end{tabular}}
& ROME & 0.17	& 0.16 & 0.01 & 0.06 & 0.02	& 0.03 & 0.00 & 0.00 & 0.02 & 0.02 & 0.00 & 0.00 \\
& GRACE & 0.99 & 0.01 & 1.00 & 0.21 & 0.99 & 0.02 & 1.00 & 0.27 & 0.99 & 0.02 & 1.00 & 0.27 \\
& WISE & 0.54 & 0.50 & 1.00 & 0.65 & 0.56 & 0.54 & 1.00 & 0.67 & 0.55 & 0.54 & 0.99 & 0.66 \\
& AlphaEdit & 0.98 & 0.92 & 0.89 & 0.93 & 0.99 & 0.94 & 0.83 & 0.92 & 0.98 & 0.92 & 0.78 & 0.89 \\
& UltraEdit & 0.91 & 0.88 & 0.69 & 0.82 & 0.90 & 0.88 & 0.56 & 0.76 & 0.87 & 0.85 & 0.54 & 0.74 \\
& \textit{HoReN} & \textbf{1.00} & \textbf{0.81} & \textbf{1.00} & \textbf{0.93} & \textbf{1.00} & \textbf{0.84} & \textbf{1.00} & \textbf{0.94} & \textbf{0.99} & \textbf{0.84} & \textbf{1.00} & \textbf{0.94} \\
\midrule
\multirow{6}{*}{\begin{tabular}[c]{@{}l@{}}LLaMA-3.1\\8B\end{tabular}}
& ROME & 0.04 & 0.05 & 0.01 & 0.03 & 0.02 & 0.02 & 0.02 & 0.02 & 0.03 & 0.02 & 0.03 & 0.03 \\
& GRACE & 1.00 & 0.06 & 1.00 & 0.39 & 1.00 & 0.05 & 1.00 & 0.37 & 0.99 & 0.05 & 1.00 & 0.37 \\
& WISE & 0.83 & 0.79 & 1.00 & 0.87 & 0.71 & 0.66 & 1.00 & 0.78 & 0.62 & 0.59 & 0.99 & 0.71 \\
& AlphaEdit & 0.98 & 0.90 & 0.88 & 0.92 & 0.98 & 0.93 & 0.80 & 0.90 & 0.97 & 0.92 & 0.74 & 0.87	 \\
& UltraEdit & 0.91 & 0.89 & 0.69 & 0.82 & 0.89 & 0.87 & 0.61 & 0.78 & 0.88 & 0.86 & 0.54 & 0.74 \\
& \textit{HoReN} & \textbf{0.99} & \textbf{0.88} & \textbf{1.00} & \textbf{0.96} & \textbf{1.00} & \textbf{0.92} & \textbf{1.00} & \textbf{0.97} & \textbf{0.99} & \textbf{0.93} & \textbf{0.99} & \textbf{0.97} \\
\midrule
\multirow{6}{*}{\begin{tabular}[c]{@{}l@{}}Qwen-2.5\\7B\end{tabular}}
& ROME & 0.51 & 0.51 & 0.32 & 0.44 & 0.30 & 0.28 & 0.06 & 0.17 & 0.19 & 0.19 & 0.02 & 0.09 \\
& GRACE & 1.00 & 0.02 & 1.00 & 0.27 & 1.00 & 0.02 & 1.00 & 0.27 & 1.00 & 0.02 & 1.00 & 0.27 \\
& WISE & 0.90 & 0.87 & 1.00 & 0.92 & 0.87 & 0.80 & 0.80 & 0.82 & 0.63 & 0.58 & 0.60 & 0.60 \\
& AlphaEdit & 0.86 & 0.80 & 0.98 & 0.88 & 0.85 & 0.79 & 0.91 & 0.85 & 0.99 & 0.88 & 0.88 & 0.92 \\
& UltraEdit & 0.92 & 0.80 & 0.99 & 0.90 & 0.87 & 0.81 & 0.96 & 0.88 & 0.80 & 0.75 & 0.89 & 0.81 \\
& \textit{HoReN} & \textbf{0.99} & \textbf{0.93} & \textbf{1.00} & \textbf{0.97} & \textbf{1.00} & \textbf{0.93} & \textbf{0.99} & \textbf{0.97} & \textbf{1.00} & \textbf{0.93} & \textbf{0.98} & \textbf{0.97} \\
\midrule
\multirow{6}{*}{\begin{tabular}[c]{@{}l@{}}SEA-LION\\32B\end{tabular}}
& ROME      & 0.02 & 0.02 & 0.02 & 0.02	 & 0.00 & 0.00 & 0.00 & 0.00 & 0.00 & 0.00 & 0.00 & 0.00 \\
& GRACE     & 0.46 & 0.02 & 1.00 & 0.21 & 0.41 & 0.02 & 1.00 & 0.20 & 0.43 & 0.02 & 1.00 & 0.21 \\
& WISE      & 0.94 & 0.72 & 1.00 & 0.88 & 0.80 & 0.68 & 1.00 & 0.82 & 0.70 & 0.63 & 1.00 & 0.76 \\
& AlphaEdit & 0.96 & 0.76 & 0.99 & 0.90 & 0.97 & 0.77 & 0.96 & 0.89 & 0.97 & 0.77 & 0.95 & 0.89	 \\
& UltraEdit & 0.76 & 0.70 & 0.98 & 0.80 & 0.75 & 0.72 & 0.95 & 0.80 & 0.72 & 0.69 & 0.93 & 0.77 \\
& \textit{HoReN} & \textbf{1.00} & \textbf{0.91} & \textbf{1.00} & \textbf{0.97} & \textbf{1.00} & \textbf{0.90} & \textbf{0.99} & \textbf{0.96} & \textbf{1.00} & \textbf{0.90} & \textbf{0.98} & \textbf{0.96} \\
\bottomrule
\end{tabular}%
}
\end{table*}

Each baseline in Table~\ref{tab:zsre-comparison} fails in one or more of reliability, generalization, and locality as edits accumulate.

\emph{Parameter-modifying editors.} ROME collapses on all three metrics almost immediately because it overfits the rank-one update to one prompt rather than to the underlying fact, and stacking such updates disturbs not only unrelated knowledge but also the edited entries (Rel.\ $0.03$, Gen.\ $0.02$, Loc.\ $0.03$ at $N{=}1000$ on LLaMA-3.1-8B). AlphaEdit and UltraEdit retain high reliability and generalization but their locality erodes as the edit stream grows: AlphaEdit's null-space projector is estimated from a finite sample of pre-training-style activations, so it protects directions in its proxy preservation sample but cannot include directions added by later edits, and locality therefore degrades fast ($0.88 \to 0.74$ from $N{=}100$ to $1000$ on LLaMA-3.1-8B); UltraEdit's running normalization stabilizes the update geometry but still disrupts preserved knowledge, resulting in the same gradual erosion in locality ($0.69 \to 0.54$ on LLaMA-3.1-8B).

\emph{Parameter-preserving editors.} GRACE inverts the failure: reliability and locality stay $\geq 0.99$ on every row, but generalization is stuck at $0.01$--$0.06$, because unnormalized nearest-neighbor lookup conflates the prompt-specific gain (magnitude) with the semantic identity (direction). WISE's gated side-memory drifts as the codebook accumulates, dragging both reliability and generalization down.

By capturing edit-semantic identity through normalization and refining queries through Hopfield dynamics, HoReN is the only method that maintains high reliability, generalization, and locality jointly (Rel.\ $0.99$, Gen.\ $0.93$, Loc.\ $0.99$ at $N{=}1000$ on LLaMA-3.1-8B).

\subsubsection{Scaling to 10K and 50K Edits}
\label{sec:expii-scaling}

\begin{table*}[t]\scriptsize
\centering
\caption{Scaling to 10K edits on ZsRE across four models (Qwen2.5-7B, DeepSeek-R1-8B, GPT-OSS-20B, and Qwen-SEA-LION-v4-32B). Full LLaMA-3.1-8B results at large scale, including AlphaEdit and UltraEdit, are in Appendix~\ref{app:llama-scaling}.}
\label{tab:scaling-10k}
\begin{tabular}{ll|cccc|cccc|cccc}
\toprule
\textbf{Model} & \textbf{Method}
& \multicolumn{4}{c|}{\textbf{$N = 2000$}}
& \multicolumn{4}{c|}{\textbf{$N = 5000$}}
& \multicolumn{4}{c}{\textbf{$N = 10000$}} \\
\cmidrule(lr){3-6} \cmidrule(lr){7-10} \cmidrule(lr){11-14}
&
& \textbf{Rel.} & \textbf{Gen.} & \textbf{Loc.} & \textbf{OP}
& \textbf{Rel.} & \textbf{Gen.} & \textbf{Loc.} & \textbf{OP}
& \textbf{Rel.} & \textbf{Gen.} & \textbf{Loc.} & \textbf{OP} \\
\midrule
\multirow{3}{*}{\begin{tabular}[c]{@{}l@{}}Qwen2.5\\7B\end{tabular}}
& GRACE      & 1.00 & 0.02 & 1.00 & 0.27 & 1.00 & 0.03 & 1.00 & 0.31 & 1.00 & 0.03 & 1.00 & 0.31 \\
& WISE       & 0.30 & 0.28 & 0.81 & 0.41 & 0.24 & 0.22 & 0.79 & 0.35 & 0.21 & 0.20 & 0.76 & 0.32 \\
& \textit{HoReN}      & \textbf{1.00} & \textbf{0.92} & \textbf{1.00} & \textbf{0.97} & \textbf{0.99} & \textbf{0.89} & \textbf{1.00} & \textbf{0.96} & \textbf{0.99} & \textbf{0.89} & \textbf{0.99} & \textbf{0.96} \\
\midrule
\multirow{3}{*}{\begin{tabular}[c]{@{}l@{}}DeepSeek\\8B\end{tabular}}
& GRACE      & 1.00 & 0.02 & 1.00 & 0.27 & 1.00 & 0.02 & 1.00 & 0.27 & 1.00 & 0.02 & 1.00 & 0.27 \\
& WISE       & 0.46 & 0.44 & 1.00 & 0.59 & 0.36 & 0.33 & 1.00 & 0.49 & 0.31 & 0.30 & 1.00 & 0.45 \\
& \textit{HoReN}      & \textbf{1.00} & \textbf{0.95} & \textbf{0.98} & \textbf{0.98} & \textbf{1.00} & \textbf{0.94} & \textbf{0.97} & \textbf{0.97} & \textbf{1.00} & \textbf{0.93} & \textbf{0.97} & \textbf{0.97} \\
\midrule
\multirow{3}{*}{\begin{tabular}[c]{@{}l@{}}GPT-OSS\\20B\end{tabular}}
& GRACE      & 0.61 & 0.01 & 1.00 & 0.18 & 0.60 & 0.02 & 1.00 & 0.23 & 0.60 & 0.02 & 1.00 & 0.23 \\
& WISE       & 0.14 & 0.14 & 0.08 & 0.12 & 0.12 & 0.11 & 0.12 & 0.12 & 0.12 & 0.11 & 0.12 & 0.12 \\
& \textit{HoReN}      & \textbf{0.99} & \textbf{0.69} & \textbf{1.00} & \textbf{0.88} & \textbf{0.99} & \textbf{0.67} & \textbf{0.99} & \textbf{0.87} & \textbf{0.99} & \textbf{0.66} & \textbf{0.98} & \textbf{0.86} \\
\midrule
\multirow{3}{*}{\begin{tabular}[c]{@{}l@{}}SEA-LION\\32B\end{tabular}}
& GRACE  & 0.43 & 0.02 & 1.00 & 0.20 & 0.43 & 0.02 & 1.00 & 0.20 & 0.43 & 0.02 & 1.00 & 0.20 \\
& WISE   & 0.64 & 0.59 & 1.00 & 0.72 & 0.58 & 0.55 & 1.00 & 0.68 & 0.57 & 0.54 & 1.00 & 0.68 \\
& \textit{HoReN} & \textbf{1.00} & \textbf{0.91} & \textbf{0.98} & \textbf{0.96} & \textbf{1.00} & \textbf{0.90} & \textbf{0.97} & \textbf{0.96} & \textbf{1.00} & \textbf{0.90} & \textbf{0.97} & \textbf{0.96} \\
\bottomrule
\end{tabular}
\end{table*}

Two failure shapes are diagnostic in Tables~\ref{tab:scaling-10k}--\ref{tab:llama-scaling} and Figure~\ref{fig:50k-scaling}. On LLaMA-3.1-8B, AlphaEdit cliffs (OP $0.82 \to 0.10$ from $N{=}2000$ to $5000$ as Locality falls $0.65 \to 0.03$) because a fixed null-space projector cannot include directions added by later edits, while UltraEdit slopes (OP $0.73 \to 0.67$ over $N{=}2000$--$10000$, Locality $0.53 \to 0.42$). On Qwen2.5-7B, DeepSeek-R1-8B, GPT-OSS-20B, and Qwen-SEA-LION-v4-32B (Table~\ref{tab:scaling-10k}), GRACE's Generalization stays in $[0.01, 0.03]$ across scales (the codebook absorbs $10$K entries, so the failure is geometric rather than capacity-bound) and WISE's OP decays monotonically (collapsing to $\leq 0.13$ on GPT-OSS-20B), localizing its weakness to a drifting side-memory whose performance on preserving edits and generalization degrades with scale. HoReN is flat across $N$ in reliability $0.99$--$1.00$, generalization $0.89$--$0.95$ excluding the GPT-OSS-20B model, and locality $0.97$--$1.00$ across all four models, and all three metrics are above $0.89$ at $50$K on LLaMA-3.1-8B, because storage is per-entry and routing depends only on the directional geometry of stored keys, so adding entries does not move existing decision boundaries. Note that HoReN's generalization on GPT-OSS-20B is comparatively lower. Since GPT-OSS-20B employs the MoE structure, HoReN is applied on an attention layer, not an MLP layer with knowledge keys, resulting in the degraded generalization performance. HoReN demonstrates stable performance on reliability, generalization, and locality up to 50K edits on the ZsRE benchmark.

\subsection{Generalization Across Datasets and Models}
\label{sec:expii-generalization}

ZsRE provides a favorable separation regime; we now test whether
HoReN's advantage holds under structurally different conditions along
two orthogonal dimensions: dataset type
(Section~\ref{sec:expii-cross-dataset}) and model family
(Section~\ref{sec:expii-cross-model}).

\subsubsection{Cross-Dataset Generalization}
\label{sec:expii-cross-dataset}

\begin{figure}[t]
    \centering
    \includegraphics[width=0.95\textwidth]{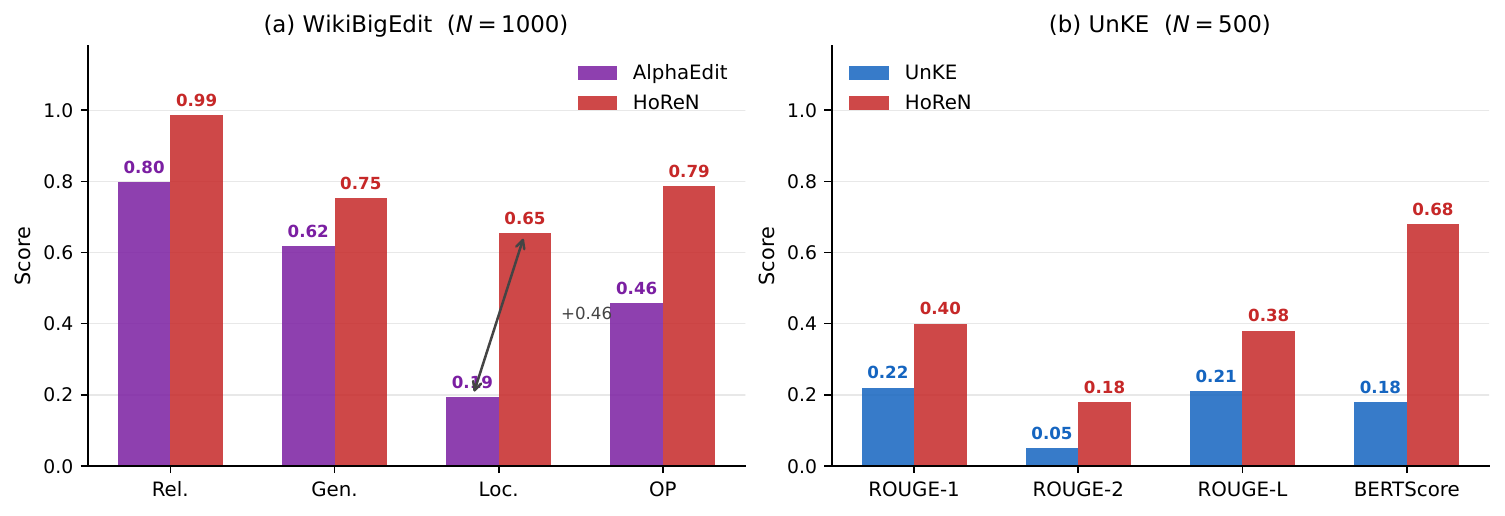}
    \caption{Cross-dataset generalization (LLaMA-3.1-8B). \textbf{(a) WikiBigEdit} ($N{=}1000$): HoReN outperforms AlphaEdit on all three metrics; AlphaEdit's locality collapses to $0.19$ while HoReN retains $0.65$, a gap of $+0.46$. \textbf{(b) UnKE} ($N{=}500$): The four scores in panel (b) are the \emph{Original}-split values from UnKEBench~\citep{Deng2024UnKEUK} and therefore quantify reliability on the original edit question; the complementary \emph{Paraphrase} (generalization) and \emph{Sub-question} (factual coverage) splits, on which HoReN also leads at $N{=}500$, are reported in Table~\ref{tab:unke-breakdown}. Full results across all $N$ in Tables~\ref{tab:alphaedit}--\ref{tab:cross-dataset} (Appendix~\ref{app:cross-dataset-tables}).}
    \label{fig:cross-dataset}
\end{figure}

On WikiBigEdit, where locality queries share subjects and relations with edits, HoReN sustains Reliability $0.986$--$0.992$, Generalization $0.753$--$0.767$, and Locality $0.631$--$0.682$ across $N{\in}\{500,1000,3000\}$ while AlphaEdit's Locality collapses to $0.169$--$0.267$ in the same grid; HoReN still outperforms AlphaEdit on this benchmark. However, the editing samples from WikiBigEdit do not have clear (subject, relation, object) structures as in ZsRE, meaning there is considerable noise besides the knowledge representation at the layer being edited by HoReN, resulting in degraded performance in generalization and locality.  On UnKE, whose targets are multi-sentence free-form passages, HoReN at $N{=}500$ leads the domain-specific UnKE baseline by $+0.18$ ROUGE-1 ($0.40$ vs.\ $0.22$), $+0.13$ ROUGE-2, $+0.17$ ROUGE-L, and $+0.50$ BERTScore on the \emph{Original} split, and also leads on \emph{Paraphrase} (ROUGE-L $0.338$ vs.\ $0.197$, BERTScore $0.659$ vs.\ $0.166$) and \emph{Sub-question} (ROUGE-L $0.332$ vs.\ $0.137$, ROUGE-1 $0.338$ vs.\ $0.141$) splits at $N{=}500$ (Table~\ref{tab:unke-breakdown}), showing the advantage of HoReN's routing mechanism on unstructured data over the UnKE editor.

\subsubsection{Cross-Model Generalization}
\label{sec:expii-cross-model}

Across Qwen2.5-7B, DeepSeek-R1-Distill-Llama-8B, GPT-OSS-20B, and Qwen-SEA-LION-v4-32B at $N{\in}\{2000,5000,10000\}$ (Table~\ref{tab:scaling-10k}), HoReN reaches OP $0.86$--$0.98$ on every cell (Rel.\ $\geq 0.99$, Gen.\ $\geq 0.66$, Loc.\ $\geq 0.97$), while GRACE is stuck at OP $0.18$--$0.31$ with Generalization $0.01$--$0.03$ and WISE ranges from OP $\leq 0.13$ on GPT-OSS-20B to $0.68$--$0.72$ on SEA-LION-32B. The four backbones come from independent training pipelines with different tokenizers, pretraining corpora, and post-training procedures, so GRACE's near-identical flat-Generalization failure across all four (a span of $0.02$) confirms the bottleneck is in the key-query comparison rather than in any model-specific representation property, and WISE's family-dependent collapse localizes its weakness to model-specific activation structure rather than to scale alone. HoReN's recipe is held fixed at $\beta{=}20$, $\gamma{=}0.1$, $M{=}1$ across all backbones; only the matching threshold and token pooling ratio change, listed once per family in Table~\ref{tab:model-params}, a single recipe transferring across seven backbones from $1.5$B to $32$B parameters reflects the directional structure of post-nonlinearity activations discussed in Section~\ref{sec:expii-analysis-ablation}.

\subsection{Why HoReN Works: Direction Is the Semantic Key}
\label{sec:expii-analysis-ablation}

\textbf{Direction, not magnitude, carries the fact.}
ROME framed MLP down-projections as a key--value associative memory, with the post-nonlinearity activation at the subject token serving as the \emph{key} into a linear store of facts and editing as the insertion of one $(k,v)$ pair without disturbing the rest. GRACE adopts the codebook-of-edits paradigm but is deliberately layer-agnostic, attaching to any chosen layer and keying on the cached input activation $h^{l-1}$ matched by Euclidean distance; HoReN takes one further step and recommends placing the codebook at the MLP, where ROME's analysis identifies the activations that actually behave as fact keys. Table~\ref{tab:grace-vs-horen} then probes whether magnitude or direction of these MLP activations carries the fact identity: holding codebook location, value adaptor, and matching threshold fixed and varying only the key--query comparison on LLaMA-3.1-8B (ZsRE, $N{=}1000$), unit-norm cosine matching alone lifts Generalization from $0.07$ to $0.33$ ($+0.26$) without measurable cost to Reliability or Locality (Loc.\ $\geq 0.99$). Stripping magnitude is therefore informative: post-nonlinearity activation magnitude behaves like a per-prompt gain (driven by surface form, length, and upstream normalization), while the \emph{direction} on the unit hypersphere carries the identity of the semantic.
\begin{wrapfigure}{r}{0.45\textwidth}
    \vspace{-6pt}
    \centering
    \includegraphics[width=0.42\textwidth]{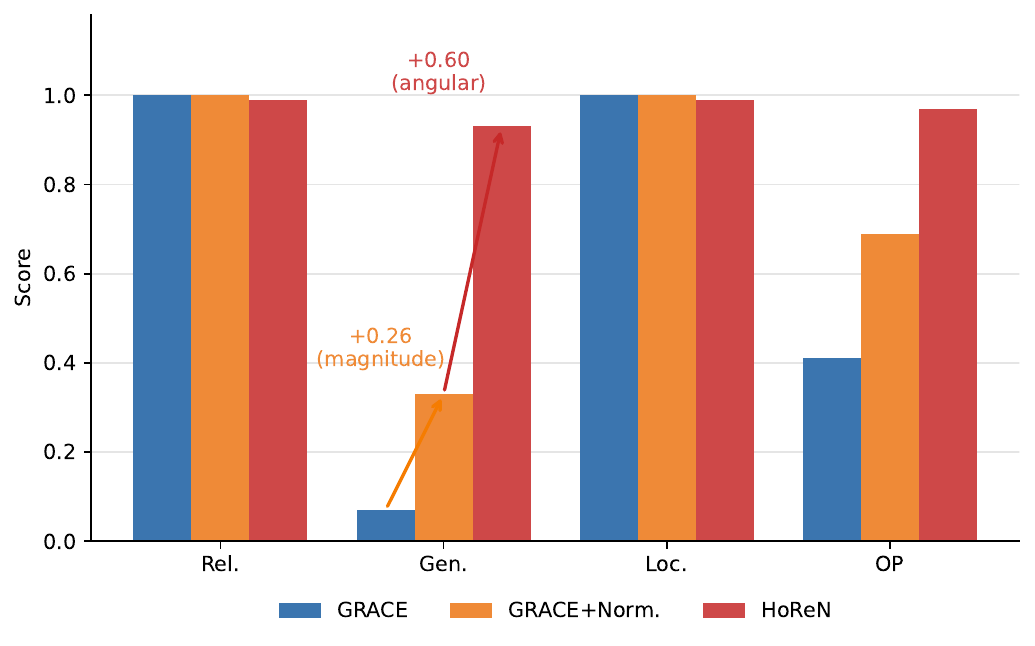}
    \caption{Representation gap diagnosis at $N{=}1000$ (ZsRE, LLaMA-3.1-8B). Generalization improves in two steps: normalization removes the magnitude component ($+0.26$); the Hopfield update closes the angular gap ($+0.60$). Reliability and Locality remain flat, confirming orthogonality of the two ingredients.}
    \label{fig:representation-gap}
    \vspace{-8pt}
\end{wrapfigure}

\textbf{Hopfield dynamics on the unit sphere control the generalization--locality tradeoff.}
Cosine matching alone leaves a residual angular gap: paraphrase queries still sit off-axis from their target keys, and one-shot argmax misroutes whenever that angle exceeds the matching threshold $c$. A damped softmax-attention update over the unit-norm codebook closes the gap, lifting Generalization from $0.33$ to $0.93$ in Table~\ref{tab:grace-vs-horen} (Figure~\ref{fig:representation-gap} visualizes the two-step ladder), and the update is direction-selective rather than uniformly attractive: an unrelated query is roughly equidistant from all stored keys, the softmax is near-uniform, and the update barely moves it; a paraphrase query is close to one key, the softmax is sharply peaked, and the update nudges it into the basin. The selectivity requires both ingredients jointly, as Table~\ref{tab:normalized-codebook} (Appendix~\ref{app:grace-vs-horen}) shows that Hopfield refinement \emph{without} normalization collapses Locality to $\leq 0.08$, since unbounded magnitudes drag unrelated queries into spurious basins. Theorem~\ref{thm:hopfield_asymptotic_convergence} characterizes the asymptotic behavior of these dynamics; HoReN's normalized, lightly damped deployment sits well inside the jointly stable Generalization/Locality region of Figure~\ref{fig:hopfield-steps}.

\textbf{Direction is a stable property of the base model.}
Together these reads explain Tables~\ref{tab:zsre-comparison} and~\ref{tab:scaling-10k}: the codebook plus value adaptor delivers Reliability $\geq 0.99$ on every row, while the routing layer sustains Generalization and Locality at scale by reading only direction. The cross-model parameter bands in Table~\ref{tab:model-params} provide an independent confirmation: matching thresholds cluster in two tight, family-coherent bands (LLaMA-3 / LLaMA-3.1 at $0.80$--$0.85$, Qwen-2.5 / SEA-LION at $0.55$--$0.60$) that survive heavy reasoning post-training, while the rest of the recipe ($\beta{=}20$, $\gamma{=}0.1$, $M{=}1$) is held fixed across all seven backbones from $1.5$B to $32$B parameters. Token-pooling and value-adaptor (direct vs.\ LoRA) ablations are deferred to Appendices~\ref{app:token-selection} and~\ref{app:lora-vs-value}.

\section{Conclusion}
\label{sec:conclusion}
We studied lifelong model editing through the lens of memory-based editors and identified paraphrase routing under accumulated edits as the bottleneck of this paradigm: existing methods can store edits faithfully but cannot reliably activate the right one when the codebook grows. We proposed \textbf{HoReN}, a routing-layer mechanism that combines L2-normalized angular matching with a single damped Hopfield-style query refinement, motivated by an attractor-basin reading in which the deferral threshold serves as the basin boundary. Theoretically, we showed that standard Hopfield retrieval admits convergent attractor dynamics, which is monotone energy descent with convergence to fixed points and would attract all queries, including unrelated ones, toward stored codes if iterated to convergence, justifying HoReN's single-step deployment. Empirically, HoReN sustains all three editing metrics above $0.89$ on a controlled $50$K-edit ZsRE stress test, reaches OP $0.94$--$0.97$ at $1{,}000$ edits across LLaMA and Qwen backbones, generalizes to structured WikiBigEdit and unstructured UnKE, and matches GRACE on edit/inference latency and parameter overhead. Limitations and future directions, linear side-memory growth, sensitivity to prompt structure, the asymptotic--single-step theory gap, and out-of-scope edit interactions, are discussed in Appendix~\ref{app:limitations}.

\bibliographystyle{plainnat}
\bibliography{example_paper}


\newpage
\appendix
\section{Limitations and Future Directions}
\label{app:limitations}

Four limitations frame the next steps for HoReN. \emph{(i) Linear side-memory growth.} HoReN's codebook grows linearly in the number of edits at ${\sim}4.08$M parameters per $1$K edits (Appendix~\ref{app:efficiency}); although it dominates parameter-modifying baselines and Pareto-dominates WISE on the (memory, generalization) frontier up to $N\!\approx\!14$--$15$K edits, beyond that point WISE's fixed-size dual memory becomes asymptotically smaller. A natural remedy is \emph{codebook compression}, which could be implemented by product/residual quantization of keys, low-rank or shared-basis pooling of value adaptors, and learned merging of semantically adjacent edits, so that the effective per-edit memory shrinks with $N$ while preserving routing fidelity. \emph{(ii) Sensitivity to prompt structure.} HoReN reaches its high-water mark on ZsRE, where each prompt cleanly factorizes into (subject, relation, object); on WikiBigEdit, whose prompts carry richer syntax and discourse (multiple entities, temporal qualifiers, longer contexts), HoReN still outperforms all baselines (Section~\ref{sec:expii-cross-dataset}) but its absolute scores drop, indicating that single-token, single-layer keying is an under-specification for complex prompts. Promising directions include multi-token / multi-layer key construction, structure-aware pooling (e.g., dependency- or entity-anchored), and routing over multiple candidate keys per edit so that reordered or compositional paraphrases remain reachable. \emph{(iii) Theory--deployment gap.} Theorem~\ref{thm:hopfield_asymptotic_convergence} characterizes asymptotic multi-step Hopfield convergence, whereas HoReN deploys a single damped step ($M{=}1$, $\gamma{=}0.1$); a finite-step, finite-codebook analysis bounding routing error after one damped pass as a function of $\beta$, $\gamma$, and codebook geometry would tighten the theory--practice link. \emph{(iv) Out-of-scope edit interactions.} The current single-key argmax routing treats edits as mutually independent and uses a fixed per-model threshold $c$; extending HoReN to temporal conflicts among edits, multi-hop reasoning across edits, and adaptive (per-query or per-region) matching thresholds are natural next steps.

\section{Related Work}
\label{app:related-work}

\paragraph{Parameter-modifying editors.} Locate-and-edit methods such as ROME~\citep{Meng2022LocatingAE} and MEMIT~\citep{Meng2022MassEditingMI} modify internal weights via causal tracing or rank-constrained updates but suffer interference under sequential application, a variant of the catastrophic forgetting problem long studied in continual learning~\citep{kirkpatrick2017overcoming}. AlphaEdit~\citep{Fang2024AlphaEditNC} mitigates this with null-space projection, and UltraEdit~\citep{Gu2025UltraEditTS} studies training-free unstructured editing. These methods concern how edits are \emph{written} into the base weights; HoReN concerns how accumulated edits are \emph{routed} at inference time.

\paragraph{Parameter-preserving / memory-based editors.} Memory-based editors store updates externally and activate them on matching queries, with each method innovating on a different axis: DEFER~\citep{mitchell2022memory} learns a refinement network for retrieval, GRACE~\citep{hartvigsen2023aging} maintains a discrete codebook keyed on hidden states with a fixed $\varepsilon$-radius match, WISE~\citep{Wang2024WISERT} adds activation-based importance weighting, and REPAIR~\citep{wang2025repair} enforces cross-layer consistency. Value payloads are typically lightweight modules, LoRA~\citep{hu2022lora}, adapters~\citep{Houlsby2019ParameterEfficientTL}, or prefix tuning~\citep{Li2021PrefixTuningOC}, recently surveyed by~\citet{Zhang2024ACS}. Conceptually this family is close to retrieval-augmented language modeling~\citep{Khandelwal2019GeneralizationTM,lewis2020retrieval}, but with keys and values constructed per edit. HoReN stays in this family and differs from GRACE along two specific axes: (i) it L2-normalizes keys and queries so matching depends on angular similarity rather than magnitude, and (ii) it applies a single damped Hopfield-style refinement step on the query before the final argmax; value adaptors are kept modular (direct vector or LoRA).

\paragraph{Associative retrieval with Hopfield networks.} Modern Hopfield networks~\citep{Ramsauer2020HopfieldNI} extend classical associative memory with continuous energy functions and high storage capacity, retrieving patterns through attractor dynamics that pull noisy queries toward stored memories, with connections to attention and few-shot learning~\citep{Krotov2023ANF}. To our knowledge, this perspective has not been brought to large-scale sequential model editing; HoReN borrows the energy/attractor view to motivate a lightweight retrieval refinement rather than to deploy full Hopfield dynamics.

\section{Method Details}
\label{app:method-details}

\subsection{Algorithm Pseudocode}
\label{app:algorithm}

\begin{algorithm}[H]
\caption{Codebook Update at Layer $l$}
\label{alg:hopfield_edit_simple}
\KwIn{Codebook $\mathcal{C}=\{(k_i, v_i, y_i)\}_{i=1}^{|C|}$, possibly empty. Model with codebook params $f_{\theta_C}$.}
\KwIn{Edit sample $(\mathbf{x}_t,\mathbf{y}_t^*)$}
\KwIn{Hopfield params $(\gamma,\beta, \epsilon)$, iterations $M$, match threshold $c$, adaptor steps $U$}
\KwOut{Updated codebook $\mathcal{C}$}

\BlankLine
$(H_{\mathrm{in}},\, H_{\mathrm{out}}) \leftarrow f_\theta(\mathbf{x}_t)$ \tcp*{$H_{\mathrm{in}}$: layer $l$ input , $H_{\mathrm{out}}$: layer $l$ output}
$q_0 \leftarrow \operatorname{normalize}(\operatorname{select}(H_{\mathrm{in}}))$\;
$q \leftarrow q_0$\;

\BlankLine
\If{$\mathcal{C}=\emptyset$}{
    $v_{\text{new}} \leftarrow \textsc{InitValueAdaptor}()$\;
    \For{$u \leftarrow 1$ \KwTo $U$}{
        $H_{\mathrm{out}} \leftarrow H_{\mathrm{out}} + v_{\text{new}}$\;
        Compute $\mathcal{L}_{\mathrm{CE}}(f_{\theta_C}(\mathbf{x}_t), \mathbf{y}_t^*)$ and update only $v_{\text{new}}$ by gradient descent\;
        \If{early stopping criterion is met}{
            \textbf{break}\;
        }
    }
    $\mathcal{C} \leftarrow \mathcal{C} \cup \{(q_0, v_{\text{new}}, \mathbf{y}_t^*)\}$\;
    \Return $\mathcal{C}$\;
}

\BlankLine
$K \leftarrow \operatorname{stack}(\{k_i:(k_i,v_i,y_i)\in\mathcal{C}\})$\;
\For{$m \leftarrow 1$ \KwTo $M$}{
    $p \leftarrow \operatorname{softmax}(\beta \, q K^\top)$\;
    $q_{\text{new}} \leftarrow \operatorname{normalize}(pK)$\;
    \If{$||q_{\text{new}}-q||_2 \le \epsilon$} {
     \tcp{Early Stop}
     \textbf{break}\;
    }
    $q \leftarrow \operatorname{normalize}\!\big((1-\gamma)q + \gamma q_{\text{new}}\big)$\;
}

\BlankLine
$\mathbf{a} \leftarrow qK^\top$\;
$i^\star \leftarrow \arg\max_i \mathbf{a}_i$\;

\BlankLine
\If{$\mathbf{a}_{i^\star} \leq c$ \textbf{or} $y_{i^\star} \neq \mathbf{y}_t^*$}{
    \tcp{No match or label conflict: add a new entry}
    $k_{\text{new}} \leftarrow q_0$\;
    $v_{\text{new}} \leftarrow \textsc{InitValueAdaptor}()$\;
    \For{$u \leftarrow 1$ \KwTo $U$}{
        $H_{\mathrm{out}} \leftarrow H_{\mathrm{out}} + v_{\text{new}}$\;
        Compute $\mathcal{L}_{\mathrm{CE}}(f_{\theta_C}(\mathbf{x}_t), \mathbf{y}_t^*)$ and update only $v_{\text{new}}$ by gradient descent\;
        \If{early stopping criterion is met}{
            \textbf{break}\;
        }
    }
    $\mathcal{C} \leftarrow \mathcal{C} \cup \{(k_{\text{new}}, v_{\text{new}}, \mathbf{y}_t^*)\}$\;
}
\Else{
    \tcp{Match with consistent label: use existing entry}
    \For{$u \leftarrow 1$ \KwTo $U$}{
        $H_{\mathrm{out}} \leftarrow H_{\mathrm{out}} + v_{i^\star}$\;
        Compute $\mathcal{L}_{\mathrm{CE}}(f_{\theta_C}(\mathbf{x}_t), \mathbf{y}_t^*)$ and update only $v_{i^\star}$ by gradient descent\;
        \If{early stopping criterion is met}{
            \textbf{break}\;
        }
    }   
}

\Return $\mathcal{C}$
\end{algorithm}

\section{Proofs of Theoretical Results}
\label{sec:proofs}

\subsection{Setup and Descent Inequality}

The two results below share a common Lyapunov construction. We isolate the per-step descent inequality first; the asymptotic theorem and finite-step proposition then follow as two complementary readings of it.

\begin{lemma}[Per-step descent of standard Hopfield retrieval]
\label{lem:hopfield_descent}
Let $K\in\mathbb{R}^{C\times d}$ be a codebook with rows satisfying $\|k_i\|_2=1$, let $q^{(0)}\in\mathbb{R}^{1\times d}$ satisfy $\|q^{(0)}\|_2=1$, and let $\beta>0$. Define
\[
T(q):=\operatorname{softmax}(\beta qK^\top)\,K,\quad
F(q):=\tfrac1\beta\log\!\sum_{i=1}^{C}\exp(\beta\,qk_i^\top),\quad
E(q,K):=\tfrac12\|q\|_2^2-F(q).
\]
Then for every $s\ge 0$ the iterate $q^{(s+1)}=T(q^{(s)})$ satisfies
\[
E(q^{(s+1)},K)-E(q^{(s)},K)\;\le\;-\tfrac12\,\|q^{(s+1)}-q^{(s)}\|_2^2.
\]
\end{lemma}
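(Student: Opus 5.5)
The plan is the standard concave–convex (CCCP) / majorization argument of Ramsauer et al. The key structural fact is that $F$ is convex and differentiable, with gradient
\[
\nabla F(q)=\operatorname{softmax}(\beta qK^\top)\,K = T(q).
\]
We verify this directly: differentiating $\tfrac1\beta\log\sum_i e^{\beta qk_i^\top}$ gives $\sum_i p_i k_i$. Convexity of $F$ follows from convexity of log-sum-exp composed with the linear map $q\mapsto \beta qK^\top$, scaled by $1/\beta>0$. Thus $E$ is a difference of the strongly convex quadratic $\tfrac12\|q\|^2$ and the convex function $F$, and the update $q^{(s+1)}=T(q^{(s)})=\nabla F(q^{(s)})$ is exactly the CCCP step. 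It minimizes the convex surrogate $\tfrac12\|q\|^2 - F(q^{(s)}) - \langle \nabla F(q^{(s)}), q-q^{(s)}\rangle$ over $q$.

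Write $q=q^{(s)}$ and $q'=q^{(s+1)}=\nabla F(q)$. First, convexity of $F$ gives the supporting-hyperplane inequality $F(q')\ge F(q)+\langle \nabla F(q), q'-q\rangle = F(q)+\langle q', q'-q\rangle$. Second, we expand the quadratic exactly:
\[
\tfrac12\|q'\|^2-\tfrac12\|q\|^2 = \langle q', q'-q\rangle - \tfrac12\|q'-q\|^2 .
\]
Subtracting the first inequality from this identity, the inner-product terms cancel and we get
\[
E(q')-E(q)\le \langle q',q'-q\rangle-\tfrac12\|q'-q\|^2-\langle q',q'-q\rangle=-\tfrac12\|q'-q\|^2,
\]
which is the claim. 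The hypotheses $\|k_i\|=1$ and $\|q^{(0)}\|=1$ are not actually needed for this lemma; they only matter later, for boundedness (e.g.\ $\|T(q)\|\le 1$ since $T(q)$ is a convex combination of unit vectors). I will remark on this.

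The only step needing care is the identification $\nabla F=T$ together with the convexity of $F$. The one possible obstacle is making sure the sign and orientation of the supporting-hyperplane inequality is used correctly; everything else is algebra. Convexity itself can be cited as standard, or checked via the Hessian $\beta\,K^\top(\operatorname{diag}(p)-p^\top p)K\succeq 0$, which is a scaled covariance matrix.
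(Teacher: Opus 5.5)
Your proof is correct and matches the paper's argument: both identify $\nabla F = T$, apply the first-order convexity inequality for $F$ at $q^{(s)}$ evaluated at $q^{(s+1)}$, and use the exact quadratic identity so the inner-product terms cancel, leaving $-\tfrac12\|q^{(s+1)}-q^{(s)}\|_2^2$. Your remark that the unit-norm hypotheses are not needed here is also accurate; they only enter later, through boundedness of the iterates and the lower bound on $E$.
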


\begin{proof}
We construct a Lyapunov function. For $q\in\mathbb{R}^{1\times d}$, the potential $F$ is the log-sum-exp of the similarities,
\begin{equation}
F(q) := \frac{1}{\beta} \log \left( \sum_{i=1}^C \exp(\beta \, q k_i^\top) \right) = \frac{1}{\beta} \text{lse}_\beta(q K^\top),
\end{equation}
where $qk_i^\top$ denotes the dot product between the query and the $i$-th key. The function $F(q)$ is convex in $q$. Its gradient with respect to $q$ is:
\begin{equation}
\nabla_q F(q) = \frac{\sum_{i=1}^C k_i \exp(\beta \, q k_i^\top)}{\sum_{i=1}^C \exp(\beta \, q k_i^\top)} = \text{softmax}(\beta \, q K^\top) K = p K,
\end{equation}
where $p = \text{softmax}(\beta \, q K^\top) \in \mathbb{R}^{1 \times C}$. Consequently, the standard Hopfield update corresponds exactly to the gradient mapping:
\begin{equation} \label{eq:update_app}
q^{(s+1)} = \nabla_q F(q^{(s)}).
\end{equation}

Define the energy function as $E(q, K) := \frac{1}{2}\|q\|_2^2 - F(q)$. Since $F$ is convex, it satisfies the first-order inequality:
\begin{equation}
F(y) \ge F(x) + \langle \nabla_q F(x), (y-x)^\top \rangle
\end{equation}
for all $x, y \in \mathbb{R}^{1 \times d}$. Letting $x=q^{(s)}$ and $y=q^{(s+1)}$, and using \eqref{eq:update_app}, we obtain:
\begin{equation}
F(q^{(s+1)}) - F(q^{(s)}) \ge \langle q^{(s+1)}, (q^{(s+1)} - q^{(s)})^\top \rangle = q^{(s+1)} (q^{(s+1)} - q^{(s)})^\top.
\end{equation}

We now evaluate the change in energy between consecutive iterations:
\begin{equation}
\begin{aligned}
E(q^{(s+1)}, K) - E(q^{(s)}, K)
&= \frac{1}{2}\|q^{(s+1)}\|_2^2 - \frac{1}{2}\|q^{(s)}\|_2^2 - \Big( F(q^{(s+1)}) - F(q^{(s)}) \Big) \\
&\leq \frac{1}{2}\|q^{(s+1)}\|_2^2 - \frac{1}{2}\|q^{(s)}\|_2^2 - q^{(s+1)} (q^{(s+1)} - q^{(s)})^\top \\
&= -\frac{1}{2} \|q^{(s+1)} - q^{(s)}\|_2^2 \leq 0.
\end{aligned}
\end{equation}

This establishes the descent inequality.\qed
\end{proof}

The descent inequality has two complementary readings. \emph{Asymptotically} ($s\to\infty$), it forces $\|q^{(s+1)}-q^{(s)}\|_2\to 0$ and, combined with real-analyticity of $E$, gives single-limit convergence to a fixed point of $T$. \emph{Finitely} (any $M\ge 1$), it gives a cumulative bound on the squared increments, hence a $1/\sqrt{M}$ residual rate for the best of the first $M$ iterates (Proposition~\ref{prop:hopfield_finite_step}). We state and prove the two results separately below.

\subsection{Proof of Theorem~\ref{thm:hopfield_asymptotic_convergence}: Asymptotic Convergence}

\noindent\textbf{Theorem~\ref{thm:hopfield_asymptotic_convergence} (restated).} \emph{Under the assumptions above, the sequence $\{q^{(s)}\}_{s=0}^{\infty}$ generated by $q^{(s+1)}=T(q^{(s)})$ converges to a fixed point $q^\ast$, i.e., $q^{(s)}\to q^\ast$ as $s\to\infty$, where $q^\ast=T(q^\ast)=\operatorname{softmax}(\beta q^\ast K^\top)\,K$.}

\begin{proof}
\emph{Step 1: Boundedness of iterates and lower-boundedness of $E$.} Since $q^{(s+1)}=\operatorname{softmax}(\beta q^{(s)}K^\top)K$ is a convex combination of unit-norm rows of $K$, $q^{(s+1)}\in\operatorname{conv}\{k_1,\ldots,k_C\}$ and $\|q^{(s+1)}\|_2\le 1$. Together with $\|q^{(0)}\|_2=1$, the full sequence is bounded. For any $q$,
\[
F(q)=\tfrac1\beta\log\!\sum_{i=1}^{C}\exp(\beta\,qk_i^\top)\le \max_i qk_i^\top+\frac{\log C}{\beta}\le \|q\|_2+\frac{\log C}{\beta},
\]
so $E(q,K)\ge\tfrac12\|q\|_2^2-\|q\|_2-\tfrac{\log C}{\beta}\ge -\tfrac12-\tfrac{\log C}{\beta}$. Thus $E$ is bounded below.

\emph{Step 2: Square-summability of increments.} Set $E_s:=E(q^{(s)},K)$ and $d_s:=q^{(s+1)}-q^{(s)}$. Lemma~\ref{lem:hopfield_descent} gives $E_s-E_{s+1}\ge\tfrac12\|d_s\|_2^2$. Since $\{E_s\}$ is monotone and bounded below, it converges to some $E_\infty$, and
\[
\sum_{s=0}^{\infty}\|d_s\|_2^2\le 2(E_0-E_\infty)<\infty,
\]
so $\|d_s\|_2\to 0$.

\emph{Step 3: Accumulation points are fixed points.} Let $\Omega$ be the set of accumulation points of $\{q^{(s)}\}$; boundedness gives $\Omega\neq\emptyset$. For $q^\ast\in\Omega$, pick a subsequence $q^{(s_j)}\to q^\ast$. Since $\|d_{s_j}\|_2\to 0$, $q^{(s_j+1)}\to q^\ast$, and $q^{(s_j+1)}=T(q^{(s_j)})\to T(q^\ast)$ by continuity of $T$. Hence $q^\ast=T(q^\ast)$.

\emph{Step 4: Single-limit convergence via Kurdyka-\L{}ojasiewicz.} Since $F$ is real analytic, so is $E$, and $E$ satisfies the Kurdyka-\L{}ojasiewicz (KL) property. Note that $\nabla E(q)=q-\nabla F(q)=q-T(q)$, so $\nabla E(q^{(s)})=q^{(s)}-q^{(s+1)}=-d_s$. Since $E_s\to E_\infty$, every $q^\ast\in\Omega$ satisfies $E(q^\ast,K)=E_\infty$. By the uniformized KL property on the compact set $\Omega$, there exist $\eta>0$, $\varepsilon>0$, and a concave desingularizing $\varphi$ with $\varphi(0)=0$, $\varphi'>0$, such that for all sufficiently large $s$ with $E_s>E_\infty$,
\[
\varphi'(E_s-E_\infty)\,\|\nabla E(q^{(s)})\|_2\ge 1.
\]
(If $E_s=E_\infty$ for some $s$, the descent inequality forces $d_s=0$ and the sequence is constant from that point on.) By concavity of $\varphi$,
\[
\begin{aligned}
\varphi(E_s-E_\infty)-\varphi(E_{s+1}-E_\infty)
&\ge\varphi'(E_s-E_\infty)(E_s-E_{s+1})\\
&\ge\frac{E_s-E_{s+1}}{\|\nabla E(q^{(s)})\|_2}=\frac{E_s-E_{s+1}}{\|d_s\|_2}\ge\tfrac12\|d_s\|_2.
\end{aligned}
\]
Hence $\|d_s\|_2\le 2\bigl(\varphi(E_s-E_\infty)-\varphi(E_{s+1}-E_\infty)\bigr)$, and summing gives $\sum_{s=0}^{\infty}\|d_s\|_2<\infty$. Therefore $\{q^{(s)}\}$ has finite length, is Cauchy, and converges to some $q^\ast$. Combined with Step~3, $q^\ast=T(q^\ast)=\operatorname{softmax}(\beta q^\ast K^\top)\,K$. \qed
\end{proof}

\subsection{Proof of Proposition~\ref{prop:hopfield_finite_step}: Finite-Step Descent and Residual Bound}

\noindent\textbf{Proposition~\ref{prop:hopfield_finite_step} (restated).} \emph{Under the assumptions of Theorem~\ref{thm:hopfield_asymptotic_convergence}, let $M\ge 1$ be a finite integer and generate $q^{(s+1)}=T(q^{(s)})$, $s=0,1,\ldots,M-1$. Then, for every $s$,
$E(q^{(s+1)},K)\le E(q^{(s)},K)-\tfrac12\|q^{(s+1)}-q^{(s)}\|_2^2,$
and consequently
$\sum_{s=0}^{M-1}\|q^{(s+1)}-q^{(s)}\|_2^2 \le 2\bigl(E(q^{(0)},K)-E(q^{(M)},K)\bigr).$
Let $r_s:=\|T(q^{(s)})-q^{(s)}\|_2=\|q^{(s+1)}-q^{(s)}\|_2$ be the fixed-point residual at step $s$. For any lower bound $E_{\inf}$ of $E$,
$\frac{1}{M}\sum_{s=0}^{M-1}r_s^2 \le \frac{2(E(q^{(0)},K)-E_{\inf})}{M}$ and $\min_{0\le s<M}r_s \le \sqrt{2(E(q^{(0)},K)-E_{\inf})/M}.$
With the explicit constants $E_{\inf}=-\tfrac12-\frac{\log C}{\beta}$ and the normalization $\|q^{(0)}\|_2=1$, $\|k_i\|_2=1$, this yields $\min_{0\le s<M}\|T(q^{(s)})-q^{(s)}\|_2 \le 2/\sqrt{M}.$}

\begin{proof}
The per-step descent inequality is exactly Lemma~\ref{lem:hopfield_descent}. Summing it from $s=0$ to $M-1$ gives
\[
E(q^{(M)},K)-E(q^{(0)},K)\le -\tfrac12\sum_{s=0}^{M-1}\|q^{(s+1)}-q^{(s)}\|_2^2,
\]
or equivalently $\sum_{s=0}^{M-1}r_s^2\le 2\bigl(E(q^{(0)},K)-E(q^{(M)},K)\bigr)$.

If $E_{\inf}\le E(q,K)$ for all $q$, then $E(q^{(M)},K)\ge E_{\inf}$, so
\[
\sum_{s=0}^{M-1}r_s^2\;\le\;2\bigl(E(q^{(0)},K)-E_{\inf}\bigr).
\]
Dividing by $M$ and using $\min_{s} r_s^2\le\tfrac1M\sum_s r_s^2$ yields the residual bound.

For the explicit constant, Step~1 of the proof of Theorem~\ref{thm:hopfield_asymptotic_convergence} gives $E(q,K)\ge -\tfrac12-\tfrac{\log C}{\beta}$, so we may take $E_{\inf}=-\tfrac12-\tfrac{\log C}{\beta}$. Under the normalization $\|q^{(0)}\|_2=1$, $\|k_i\|_2=1$, $q^{(0)}k_i^\top\ge -1$ for every $i$, hence
\[
\sum_{i=1}^{C}\exp(\beta\, q^{(0)}k_i^\top)\ge C e^{-\beta},
\qquad
F(q^{(0)})\ge -1+\frac{\log C}{\beta},
\]
and thus
\[
E(q^{(0)},K)=\tfrac12-F(q^{(0)})\le \tfrac32-\frac{\log C}{\beta}.
\]
Combining with $E_{\inf}=-\tfrac12-\frac{\log C}{\beta}$ gives $E(q^{(0)},K)-E_{\inf}\le 2$, and therefore
\[
\min_{0\le s<M}r_s\;\le\;\sqrt{\frac{2\cdot 2}{M}}=\frac{2}{\sqrt{M}}. \qquad\qed
\]
\end{proof}

\paragraph{Interpretation of the finite-step bound.} Proposition~\ref{prop:hopfield_finite_step} controls the \emph{best} of the first $M$ iterates: there exists $\bar s\in\arg\min_{0\le s<M}\|T(q^{(s)})-q^{(s)}\|_2$ with residual at most $2/\sqrt{M}$. It does \emph{not} certify that the last iterate $q^{(M)}=T^M(q^{(0)})$ is itself an approximate fixed point, only $E(q^{(M)},K)\le E(q^{(0)},K)$ is guaranteed. Whether $\|T(q^{(M)})-q^{(M)}\|_2$ is small at the last step must be checked separately. This distinction matters for HoReN: the algorithm fixes $M=1$ and outputs $q^{(1)}$ rather than the residual-minimizing iterate, so the design relies on the asymptotic-attraction picture of Theorem~\ref{thm:hopfield_asymptotic_convergence} as a warning, namely, that running $M\gg 1$ steps would contract \emph{every} query, including unrelated ones, rather than as a route to a tighter approximate fixed point.

\section{Supplementary Experimental Setup Details}
\label{app:experimental-details}

In this appendix, we provide a detailed description of the experimental configuration, including an introduction to the datasets, a comprehensive explanation of the evaluation metrics, the implementation details, and a discussion of the baselines.

\subsection{Datasets}
\label{app:datasets}

Here, we provide a detailed introduction to the datasets used in this paper:
\begin{itemize}
\item \textbf{ZsRE}~\citep{levy2017zero} is a question-answering dataset that has become the standard benchmark for lifelong knowledge editing. Edit prompts are factual questions paired with target answers, and rephrased prompts are produced via back-translation to evaluate generalization. Following prior work~\citep{Mitchell2021FastME,hartvigsen2023aging}, natural questions unrelated to the edited fact are used as out-of-scope queries to evaluate locality. Each sample contains (i) an editing prompt, (ii) the target answer, (iii) a paraphrased prompt, and (iv) a locality prompt with its own ground-truth answer. We use the split from~\citet{Mitchell2021FastME} (${>}15$K samples); the $50$K stress test is constructed from the same source distribution (Appendix~\ref{app:50k-construction}).
\item \textbf{WikiBigEdit}~\citep{Thede2025WikiBigEditUT} is a large-scale benchmark constructed from Wikidata triples that introduces a substantially harder locality regime: locality queries are sampled from triples sharing the same subject and related relations as the edited facts, placing them semantically close to edit queries. This setting tests whether an editor can preserve clean angular separation between edits and unrelated knowledge even when the two are topically adjacent, a property that is trivially satisfied on ZsRE but fails for many existing methods on WikiBigEdit.
\item \textbf{UnKE}~\citep{Deng2024UnKEUK} is an unstructured knowledge editing benchmark in which edits are free-form passages rather than triples and target answers are multi-sentence generations rather than short tokens. UnKE serves as a probe into the robustness of an editor beyond the structured (subject, relation, object) regime. Because targets are free-form, evaluation uses surface- and semantic-similarity metrics (ROUGE-1/2/L and BERTScore) instead of token-level exact match (see Appendix~\ref{app:metrics-unstructured}).
\end{itemize}

\subsection{Baselines}
\label{app:baselines}

We describe the baselines used across the experiments. The descriptions are grouped by the binary categorization introduced in Section~\ref{sec:expii-setup}: (I) parameter-modifying methods, which write closed-form updates into the base model's feed-forward weights, and (II) parameter-preserving methods, which leave every base feed-forward weight frozen and store edits in an external/side module consulted at inference. The UnKE baseline is included only for the unstructured editing benchmark, as stated in Section~\ref{sec:expii-setup}.

\paragraph{(I) Parameter-modifying baselines.}
\begin{itemize}
\item \textbf{ROME}~\citep{Meng2022LocatingAE}: a single-fact editor for structured (subject, relation, object) triples. ROME first runs a causal-tracing analysis to identify the single mid-layer feed-forward block most responsible for recalling a given fact, then treats that block as an associative key--value memory in which the ``key'' is the internal representation of the subject and the ``value'' is the representation that triggers the object. To install a new fact it computes a closed-form, rank-one adjustment to that block's output projection so that the subject's key is mapped to the new object's representation while the responses for all other previously cached keys are left as unchanged as possible. ROME edits one fact at a time and modifies the base model's weights in place.

\item \textbf{AlphaEdit}~\citep{Fang2024AlphaEditNC}: a closed-form batch editor designed for sequential editing. AlphaEdit performs a MEMIT-style update to a feed-forward block, but before applying the update it first projects the proposed weight change onto the subspace orthogonal to the internal representations of a sample of preserved facts drawn from a general corpus. Because the update is forced to lie in this ``do-not-disturb'' subspace, applying it to the model leaves the outputs for those preserved representations exactly unchanged, which lets many edits be stacked sequentially with reduced interference on previously stored knowledge.

\item \textbf{UltraEdit}~\citep{Gu2025UltraEditTS}: a training-free, subject-free, and memory-free editor. UltraEdit requires no optimizer training, no identification of a subject token, and stores no record of past edits. From a single forward pass it reads the hidden representation at a chosen layer and one gradient signal that points in the direction of the desired output, and combines them in closed form into a one-step adjustment of the layer's weights. As successive edits arrive it maintains a running normalization of the statistics of those hidden representations so that later edits remain well-scaled relative to earlier ones, enabling long edit streams.

\item \textbf{UnKE}~\citep{Deng2024UnKEUK}: a domain-specific baseline for unstructured edits whose targets are free-form multi-sentence passages rather than short entity strings. UnKE extends the locate-and-edit recipe by working at the granularity of an entire layer's input/output rather than a single subject token: the ``key'' becomes the chosen layer's input representation of the whole question and the ``value'' becomes the layer-output representation that, if produced, would cause the rest of the network to generate the desired passage. UnKE first solves a causal optimization to determine that target layer-output, then updates the feed-forward weights of that layer so the question's input representation produces it.
\end{itemize}

\paragraph{(II) Parameter-preserving baselines.}
\begin{itemize}
\item \textbf{GRACE}~\citep{hartvigsen2023aging}: a lifelong editor that leaves all model weights frozen and instead maintains an external lookup table whose entries are pairs of (a stored hidden representation, the replacement hidden representation that should be produced at one chosen layer). Each entry also owns a small ``acceptance radius'' around its stored representation. At inference, the current activation at that layer is compared against every stored entry; if it falls inside any entry's acceptance radius, the activation is overwritten by that entry's replacement, otherwise the model runs unchanged. Entries are added, shrunk, or merged as new edits arrive, so the lookup table grows monotonically with the edit stream.

\item \textbf{WISE}~\citep{Wang2024WISERT}: a lifelong editor that picks one mid-to-late feed-forward block of the base model and never modifies its weights. WISE allocates an extra copy of just that block's value-projection matrix as a ``side memory'', initialized identically to the original. All edits are written into the side copy only. To prevent edits from interfering with each other, the edit history is split into shards: each shard is trained on a different random subset of the side copy's parameters, and the resulting shards are then combined into one side copy via the Ties-merging procedure -- this merging happens across the side shards only and the side copy is never folded back into the original block. At inference, a routing rule measures how strongly each input token activates the difference between the side copy and the original block; tokens whose activation crosses a learned threshold are sent through the side copy, all others go through the untouched original block.

\item \textbf{REPAIR}~\citep{wang2025repair}: a closed-loop extension of the WISE side-memory paradigm. REPAIR keeps the base feed-forward weights frozen and stores edits in $K$ shard copies of one mid-to-late FFN value matrix, routed at inference by an activation-score margin between each shard and the original block. On top of pure side-memory routing it adds three components to stabilize long edit streams: (i) distribution-aware inner-batch knowledge distillation that groups semantically similar edits and aligns their hidden representations before writing, (ii) a closed-loop error-feedback controller that monitors per-shard accuracy after each edit window and prunes-and-retrains the worst-performing shard on the accumulated error pool, and (iii) loss-aware Weighted TIES merging that fuses the shard deltas into a single side memory with weights inversely proportional to each shard's training loss.
\end{itemize}

\subsection{Metrics}
\label{app:metrics}

We evaluate knowledge editing methods along three axes, reliability, generalization, and locality, following the evaluation framework of \citet{Wang2024WISERT}. All metrics are computed over a set of $N$ edit instances $\{(x_i, y_i)\}_{i=1}^N$, where $x_i$ is the edit prompt and $y_i$ is the desired target answer. We denote the original unedited model by $f_{\theta_0}$ and the model after sequentially applying all $N$ edits by $f_{\theta_N}$. Let $\mathcal{V}$ denote the vocabulary. Given a model $f$ and prompt $x$, we obtain a predicted sequence $\hat{y} = (\hat{t}_1, \dots, \hat{t}_L)$ via greedy autoregressive decoding,
\begin{equation}
\hat{t}_j = \operatorname*{arg\,max}_{v \in \mathcal{V}}\; P_f\!\left(v \mid x,\, \hat{t}_1, \dots, \hat{t}_{j-1}\right),
\end{equation}
where $L$ is fixed to the number of tokens in the target $y_i$. We define the per-instance task measure
\begin{equation}
m(y, \hat{y}) = \frac{1}{L}\sum_{j=1}^{L} \mathbf{1}[\hat{t}_j = t_j],
\end{equation}
which computes the fraction of correctly predicted tokens.

\subsubsection{Structured Metrics (ZsRE \& WikiBigEdit)}
\label{app:metrics-structured}

Following the previous work~\citep{Mitchell2021FastME,hartvigsen2023aging,Wang2024WISERT}, the three metrics on structured benchmarks are defined as follows.
\begin{itemize}
\item \textbf{Reliability} measures whether the edited model produces the target answer for the original edit prompt:
\begin{equation}
\mathrm{Reliability} = \frac{1}{N}\sum_{i=1}^{N} m\!\left(y_i,\;\hat{y}_i^{\,f_{\theta_N},\, x_i}\right).
\end{equation}
\item \textbf{Generalization} evaluates whether the edit transfers to semantically equivalent rephrasings $x_i'$ of the edit prompt:
\begin{equation}
\mathrm{Generalization} = \frac{1}{N}\sum_{i=1}^{N} m\!\left(y_i,\;\hat{y}_i^{\,f_{\theta_N},\, x_i'}\right).
\end{equation}
\item \textbf{Locality} quantifies the extent to which the edit preserves the model's behavior on unrelated inputs. For each locality prompt $x_i^\ell$ with associated ground-truth length $L_i^\ell$, we extract the last $L_i^\ell$ tokens from the full output sequence of both the original and edited models, denoted $z_i^{\,f_{\theta_0}}$ and $z_i^{\,f_{\theta_N}}$ respectively, and compute
\begin{equation}
\mathrm{Locality} = \frac{1}{N}\sum_{i=1}^{N} \frac{1}{L_i^\ell}\sum_{j=1}^{L_i^\ell} \mathbf{1}\!\left[z^{\,f_{\theta_0}}_{i,j} = z^{\,f_{\theta_N}}_{i,j}\right].
\end{equation}
Notably, locality compares the original and edited model outputs to each other rather than against a ground-truth label, directly measuring behavioral preservation.
\end{itemize}
The Overall Performance (OP) score reported alongside the three axes is their geometric mean, $\mathrm{OP} = (\mathrm{Reliability}\cdot\mathrm{Generalization}\cdot\mathrm{Locality})^{1/3}$.

\subsubsection{Unstructured Metrics (UnKE)}
\label{app:metrics-unstructured}

Because UnKE targets are multi-sentence free-form passages, the per-token measure $m(y,\hat{y})$ is replaced by surface-level and semantic-level similarity scores between the generated answer $\hat{y}_i$ and the reference $y_i$. Following the original UnKE protocol~\citep{Deng2024UnKEUK}, the main text reports BLEU, ROUGE-1, ROUGE-2, ROUGE-L, and BERTScore. Appendix~\ref{app:cross-dataset-tables} additionally includes the full UnKEBench view with Original, Paraphrase, and Sub-question splits and metric descriptions; these correspond to reliability, generalization, and factual coverage for unstructured passages, rather than to the structured locality metric used for ZsRE and WikiBigEdit.

\subsection{Implementation Details}
\label{app:implementation-details}

\subsubsection{Baseline Hyperparameters}
\label{app:baseline-params}

For every baseline we adopt the original hyperparameters published by the corresponding authors and only re-map architecture-specific module names (e.g., \texttt{mlp.down\_proj} for LLaMA/Qwen, \texttt{self\_attn.o\_proj} for GPT-OSS) when porting a recipe to a model that the original paper did not study. Random seeds are fixed to $42$ across all experiments for all baselines and HoReN.

\paragraph{ROME.} We use the \citet{Meng2022LocatingAE} recipe verbatim: rewrite layer $\ell{=}5$, \texttt{subject\_last} fact-token, $25$ value-optimization steps with learning rate $0.5$, weight decay $10^{-3}$, KL factor $0.0625$, clamp norm factor $4$, and second-moment adjustment with $K_0$ statistics estimated from $10^5$ Wikipedia samples in float32. The value-loss layer is set to the last transformer block, i.e., $v_{\text{loss}} = n_{\text{layers}}-1$ for every LLM.

\paragraph{AlphaEdit.} We follow the \citet{Fang2024AlphaEditNC} LLaMA-3 configuration: a 5-layer rewrite window $\{4,5,6,7,8\}$, $\lambda{=}15{,}000$ for $K_0$ adjustment, $25$ value-optimization steps with learning rate $0.1$, weight decay $0.5$, clamp norm $0.75$, KL factor $0.0625$, $L_2$ regularization $10$, and null-space threshold $2{\times}10^{-2}$. The null-space projection $P$ is precomputed once per model from the same $10^5$-sample Wikipedia covariance. The value-loss layer is again $n_{\text{layers}}-1$ ($31$ for the 8B LLaMAs, $27$ for both 28-layer Qwen-based models, $63$ for the 64-layer SEA-LION-32B).

\paragraph{GRACE.} We use the \citet{hartvigsen2023aging} defaults: codebook learning rate $1.0$ (SGD) with $100$ inner steps, Euclidean distance, cold value initialization, \texttt{replace\_last} token replacement, coverage-based $\epsilon$-expansion with initial $\epsilon{=}1$ and early-stop regularization. Edits are applied to a single MLP \texttt{down\_proj} layer per LLM (see the cross-method layer alignment below).

\paragraph{WISE.} We adopt the \citet{Wang2024WISERT} Tab.~10 recipe: edit learning rate $1.0$ (SGD), $70$ inner iterations, mask ratio $\rho{=}0.2$, activation margins $(\alpha,\beta,\gamma){=}(5,20,10)$, activation ratio $0.88$, side-memory merge frequency $1000$ with TIES merging at density $0.53$ and weight $1.0$, and retrieval enabled. Norm constraint is $1.0$. Edits are applied to a single MLP \texttt{down\_proj} layer per LLM (see the cross-method layer alignment below).

\paragraph{REPAIR.} We follow the \citet{wang2025repair} configuration verbatim, using their official released hyperparameters (\citet{wang2025repair}, Appendix~C, Table~6) for every backbone we share with that paper: edit learning rate $0.9$ (SGD) with up to $10{,}000$ inner iterations, mask ratio $\rho{=}0.2$, error threshold $\tau_{\text{correct}}{=}0.85$, distillation weight $\lambda_{KD}{=}1.0$ with temperature $2.0$, routing margins $(\gamma_1,\gamma_2,\gamma)$ and activation ratio set to the per-model values listed in their Table~6 ($\gamma_1{=}2,\gamma_2{=}20,\gamma{=}10$, act.\ ratio $0.20$ for LLaMA-3 layer $29$; $\gamma_1{=}5,\gamma_2{=}20,\gamma{=}0$, act.\ ratio $0.88$ for Qwen2.5 layer $23$), and Weighted-TIES shard merging with loss-temperature $\lambda{=}0.2$. Edits are applied to the same MLP \texttt{down\_proj} layer used by GRACE/WISE/HoReN (see the cross-method layer alignment below); for backbones not studied by~\citet{wang2025repair} we transfer the same-family recipe (LLaMA-3 settings for the LLaMA-derived models and GPT-OSS, Qwen-2.5 settings for Qwen-derived models).

\paragraph{UltraEdit.} We use the \citet{Gu2025UltraEditTS} recipe: lifelong learning rate $10^{-6}$, \texttt{token=mask}, editor batch size $1024$, with the published \texttt{gate\_proj}~+~\texttt{up\_proj} layer ranges for LLaMA-3-8B (layers 11--15 / 18--24) and Qwen2.5-7B (layers 18--26). For DeepSeek-R1-Distill-Qwen-1.5B (28 layers) and Qwen-SEA-LION-v4-32B (64 layers), where no published recipe exists, we transfer the Qwen recipe (layers 16--20 and 18--26 respectively).

\subsubsection{HoReN Hyperparameters}
\label{app:horen-params}

HoReN uses $\beta{=}20$, $\gamma{=}0.1$, and $M{=}1$ across all experiments for the Hopfield-style refinement.

When a new entry is created (Algorithm~\ref{alg:hopfield_edit_simple}), the base model is frozen and only the codebook value adaptor parameters are optimized by Adam on the per-token cross-entropy loss of the target answer. Per-edit adaptor optimization uses learning rate $0.1$, up to $U{=}50$ optimization steps per edit, and early stopping when the per-token loss falls below $10^{-2}$ or fails to improve for 3 consecutive steps. No gradient flows back into the base model or into stored codebook keys; each codebook value adaptor is trained in isolation. The choice of layer where the HoReN codebook resides is specified in Appendix~\ref{app:cross-alignment-and-hardware}.

Two other model-specific parameters are required: the matching threshold $c$ and the token pooling ratio, both selected once per model via ablation; per-model values are listed in Table~\ref{tab:model-params}. HoReN uses tensor values as the default codebook value adaptor; the optional LoRA variant uses rank $r{=}4$.

Table~\ref{tab:model-params} lists the two model-specific parameters, matching threshold $c$ and token pooling ratio, for each backbone. Both are selected once per model via the ablation in Section~\ref{sec:expii-analysis-ablation} and held fixed across all datasets and edit scales.

\begin{table}[H]\small
\centering
\caption{Per-model hyperparameters used in all experiments.}
\label{tab:model-params}
\begin{tabular}{llcc}
\toprule
\textbf{Family} & \textbf{Model} & \textbf{Threshold $c$} & \textbf{Pooling ratio} \\
\midrule
\multirow{3}{*}{LLaMA-3/3.1}
  & LLaMA-3-8B-Instruct           & 0.80 & 50\% \\
  & LLaMA-3.1-8B-Instruct         & 0.85 & 60\% \\
  & DeepSeek-R1-Distill-Llama-8B  & 0.85 & 60\% \\
\midrule
\multirow{3}{*}{Qwen-2.5/3}
  & Qwen2.5-7B-Instruct           & 0.55 & 60\% \\
  & DeepSeek-R1-Distill-Qwen-1.5B & 0.60 & 60\% \\
  & Qwen-SEA-LION-v4-32B-IT       & 0.55 & 60\% \\
\midrule
GPT-OSS         & GPT-OSS-20B                   & 0.80 & 60\% \\
\bottomrule
\end{tabular}
\end{table}

Code, commands, and logs are available at \url{https://anonymous.4open.science/r/elephant-4B7F}. Portability guidelines are in Appendix~\ref{app:portability}.

\subsubsection{Cross-method alignment and Hardware}
\label{app:cross-alignment-and-hardware}

\paragraph{Cross-method layer alignment.} To avoid confounding HoReN's gains with a more favorable layer choice, GRACE, WISE, and HoReN are configured to edit the same layer per LLM: layer $29$ (LLaMA-3 / 3.1 / DeepSeek-R1-Distill-LLaMA-8B), layer $24$ (Qwen2.5-7B-Instruct), layer $27$ (DeepSeek-R1-Distill-Qwen-1.5B), layer $56$ (SEA-LION-32B), and layer $20$ of \texttt{self\_attn.o\_proj} (GPT-OSS-20B). These choices follow the published WISE/GRACE convention of editing a late MLP \texttt{down\_proj} block ($\approx n_{\text{layers}}-3$ to $n_{\text{layers}}-8$). ROME and AlphaEdit are excluded from this alignment because their original methodology prescribes different layer locations (causal-tracing layer $5$ for ROME; the rewrite window $\{4,\ldots,8\}$ for AlphaEdit) and changing them would deviate from those papers.

\paragraph{Hardware.} Experiments run on RTX~5090 in bf16, with GPT-OSS-20B run on RTX~Pro~6000 WS, and QWEN-SEA-LION-32B on H200.

\subsubsection{50K ZsRE Stress-Test Construction}
\label{app:50k-construction}

The 50K stress test in Figure~\ref{fig:50k-scaling} is drawn from a pool of 75{,}658 ZsRE-style samples aggregated from the standard ZsRE train split released by~\citet{Mitchell2021FastME} together with additional entries drawn from the original ZsRE source corpus~\citep{levy2017zero} that do not overlap with the standard split, yielding a pool large enough to support a stream of 50K non-overlapping edits. Each pool entry carries (i) an original prompt, (ii) a target answer, (iii) a paraphrased query, and (iv) a locality query, so the three evaluation sets ($\mathcal{D}_{\mathrm{edit}}, \mathcal{D}_{\mathrm{rephrase}}, \mathcal{D}_{\mathrm{locality}}$) are aligned one-to-one with each edit, matching the Section~\ref{sec:problem-formulation} formulation. For the stress test we sample 50K entries from this pool (with a fixed seed) and apply them as a single uninterrupted sequential stream on LLaMA-3.1-8B, chosen as a representative 8B-scale model; results on other backbones at smaller scales are consistent (Table~\ref{tab:scaling-10k}). Evaluation is carried out at checkpoints $N\in\{1\text{K},10\text{K},15\text{K},20\text{K},30\text{K},40\text{K},50\text{K}\}$ over the matching prefix of each dataset.

\subsubsection{Portability Guidelines}
\label{app:portability}

Model-specific parameters are limited to two choices: (1) the matching
threshold $c$ and (2) the token pooling ratio. Both are selected once
per model and then held fixed across datasets and edit scales. The
resulting values (Table~\ref{tab:model-params}) cluster tightly within
each family while shifting cleanly between families, indicating that
they reflect genuine embedding-geometry differences rather than
dataset- or benchmark-specific tuning.

Concretely, the calibrated thresholds fall into three distinct,
family-coherent bands:

\begin{itemize}
  \item \textbf{LLaMA-3 / 3.1 family} ($c \in [0.80, 0.85]$, span
    $0.05$): LLaMA-3-8B-Instruct uses $0.80$, while LLaMA-3.1-8B-Instruct
    and DeepSeek-R1-Distill-Llama-8B both use $0.85$. The fact that
    DeepSeek's distilled LLaMA inherits the LLaMA-3.1 threshold despite
    having undergone substantial reasoning-oriented post-training is
    direct evidence that $c$ tracks the underlying representation
    geometry of the base model rather than the downstream task.
  \item \textbf{Qwen-2.5 / 3 family} ($c \in [0.55, 0.60]$, span
    $0.05$): Qwen2.5-7B-Instruct and the 32B Qwen-SEA-LION-v4 both use
    $0.55$, and DeepSeek-R1-Distill-Qwen-1.5B uses $0.60$ -- again,
    the distilled variant remains within the family band. The
    consistent $\sim\!0.25$ gap from the LLaMA band reflects the
    smaller cosine spread of Qwen hidden states at the edited layer,
    not a per-experiment correction.
  \item \textbf{GPT-OSS} ($c = 0.80$): the only model in its family,
    sitting in the LLaMA range, consistent with its larger residual
    stream norms.
\end{itemize}

Pooling ratio is even more stable: $60\%$ for every model except
LLaMA-3-8B-Instruct ($50\%$), and chosen by a single ablation
(Figure~\ref{fig:token-selection}) rather than re-tuned per benchmark or
scale.

This level of calibration is consistent with existing methods: GRACE
requires per-model $\varepsilon$, WISE requires per-model importance
weights, and REPAIR requires per-model cross-layer settings -- in each
case the per-model knob also reflects representation-level
properties rather than dataset-specific tuning. For a new model we
recommend: (i) identify the family of the base model and initialize
$c$ to the family's central value ($0.85$ for LLaMA-derived models,
$0.55$ for Qwen-derived models); (ii) run the token-selection ablation
at moderate scale ($N{=}100$) to confirm the $60\%$ pooling default;
and (iii) if needed, refine $c$ with a short sweep over
	$\{0.55, 0.60, 0.80, 0.85\}$ on $30$ edits, anchored at the
	family-default value.

\section{Additional ZsRE Results}
\label{app:zsre-results}
\subsection{LLaMA-3.1-8B Large-Scale Stability}
\label{app:llama-scaling}

Table~\ref{tab:llama-scaling} reports full LLaMA-3.1-8B results at $N{=}2000$, $5000$, and $10000$ edits, including AlphaEdit and UltraEdit which are omitted from Table~\ref{tab:scaling-10k} due to prohibitive runtime on other models. AlphaEdit collapses between $N{=}2000$ and $N{=}5000$ (OP drops from $0.82$ to $0.10$) due to near-zero locality; UltraEdit degrades more gradually (OP from $0.73$ to $0.67$). HoReN maintains OP above $0.95$ throughout all scales.

\begin{table}[H]\scriptsize
\centering
\caption{LLaMA-3.1-8B results at large scale on ZsRE ($N{=}2000$, $5000$, $10000$). AlphaEdit and UltraEdit are included here to document their scaling behavior; they are excluded from Table~\ref{tab:scaling-10k} due to prohibitive runtime on other models.}
\label{tab:llama-scaling}
\begin{tabular}{l|cccc|cccc|cccc}
\toprule
\textbf{Method}
& \multicolumn{4}{c|}{\textbf{$N = 2000$}}
& \multicolumn{4}{c|}{\textbf{$N = 5000$}}
& \multicolumn{4}{c}{\textbf{$N = 10000$}} \\
\cmidrule(lr){2-5} \cmidrule(lr){6-9} \cmidrule(lr){10-13}
& \textbf{Rel.} & \textbf{Gen.} & \textbf{Loc.} & \textbf{OP}
& \textbf{Rel.} & \textbf{Gen.} & \textbf{Loc.} & \textbf{OP}
& \textbf{Rel.} & \textbf{Gen.} & \textbf{Loc.} & \textbf{OP} \\
\midrule
GRACE      & 0.99 & 0.05 & 1.00 & 0.37 & 0.99 & 0.05 & 1.00 & 0.37 & 0.99 & 0.06 & 1.00 & 0.38 \\
WISE       & 0.49 & 0.47 & 0.99 & 0.61 & 0.40 & 0.37 & 0.98 & 0.52 & 0.37 & 0.36 & 0.98 & 0.51\\
AlphaEdit  & 0.95 & 0.88 & 0.65 & 0.82 & 0.18 & 0.15 & 0.03 & 0.10 & 0.21 & 0.17 & 0.02 & 0.08 \\
UltraEdit  & 0.87 & 0.85 & 0.53 & 0.73 & 0.86 & 0.85 & 0.45 & 0.69 & 0.86 & 0.84 & 0.42 & 0.67 \\
\textit{HoReN} & \textbf{0.99} & \textbf{0.94} & \textbf{0.98} & \textbf{0.97} & \textbf{0.99} & \textbf{0.92} & \textbf{0.96} & \textbf{0.96} & \textbf{0.99} & \textbf{0.92} & \textbf{0.96} & \textbf{0.96} \\
\bottomrule
\end{tabular}
\end{table}

Figure~\ref{fig:stability-curve} tracks the full sequential trajectory up to 10K edits: HoReN's generalization stays flat while GRACE and WISE decay monotonically, and HoReN retains $99\%$ accuracy on early edits after the full 10K sequence with no catastrophic forgetting.

\begin{figure}[H]
\centering
\includegraphics[width=\textwidth]{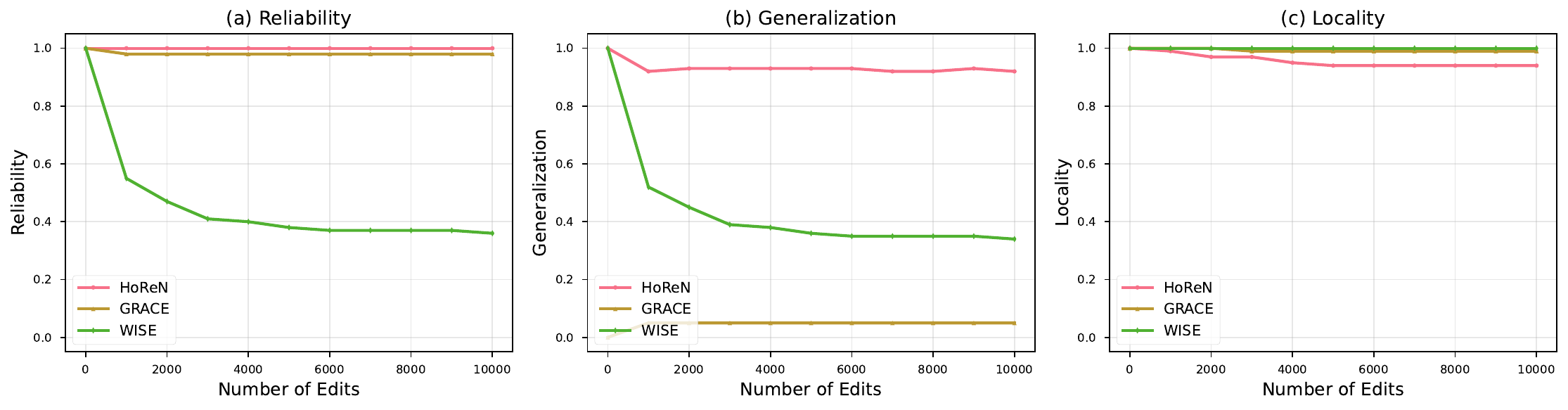}
\caption{Stability of editing performance over sequential edits on ZsRE with LLaMA-3.1-8B (up to 10K edits). HoReN maintains stable generalization while GRACE and WISE degrade.}
\label{fig:stability-curve}
\end{figure}

Figures~\ref{fig:50k-rel}--\ref{fig:50k-loc} decompose the OP curves from Figure~\ref{fig:50k-scaling} into the three constituent metrics for all methods on ZsRE (LLaMA-3.1-8B). HoReN maintains Reliability above 0.99 and Generalization above 0.91 throughout 50K edits, while Locality declines gradually from 1.00 to 0.89 the primary source of the slight OP drop. GRACE sustains near-perfect Reliability and Locality but has effectively zero Generalization at all scales, confirming the routing bottleneck rather than a weight-editing limitation. AlphaEdit's Locality collapse between 2K and 5K drives its OP cliff.

\begin{figure}[H]
    \centering
    \includegraphics[width=\textwidth]{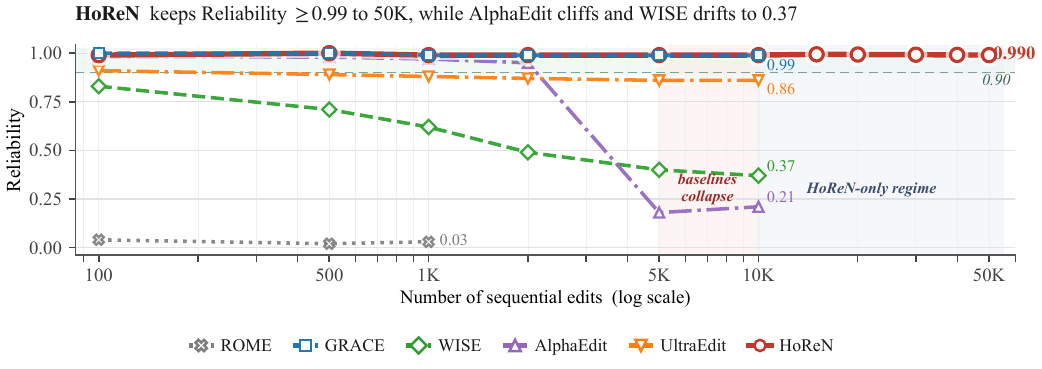}
    \caption{Reliability scaling to 50K sequential edits on ZsRE (LLaMA-3.1-8B).}
    \label{fig:50k-rel}
\end{figure}

\vspace{-0.4em}
\begin{figure}[H]
    \centering
    \includegraphics[width=\textwidth]{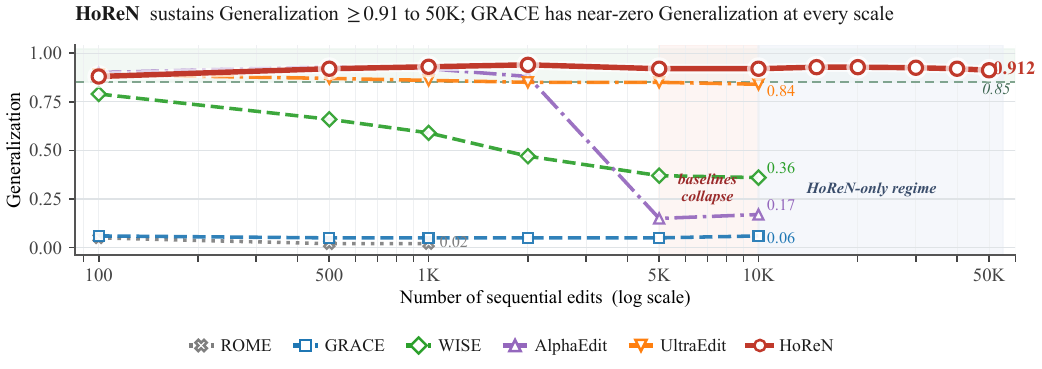}
    \caption{Generalization scaling to 50K sequential edits on ZsRE (LLaMA-3.1-8B).}
    \label{fig:50k-gen}
\end{figure}

\vspace{-0.4em}
\begin{figure}[H]
    \centering
    \includegraphics[width=\textwidth]{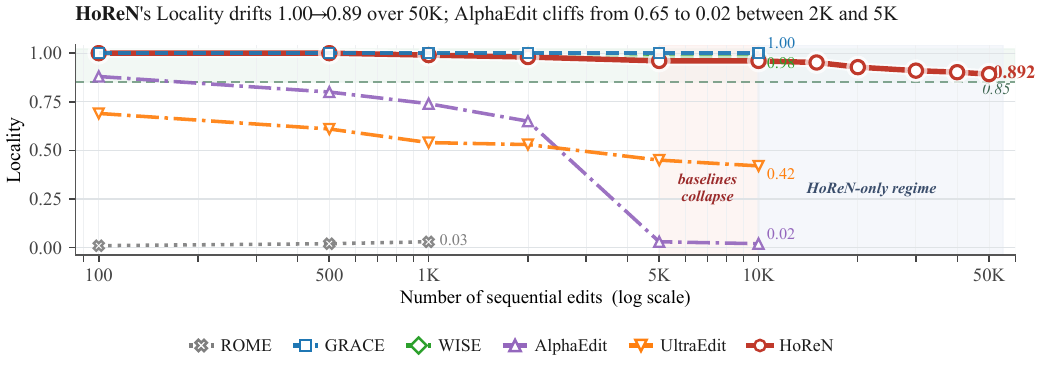}
    \caption{Locality scaling to 50K sequential edits on ZsRE (LLaMA-3.1-8B).}
    \label{fig:50k-loc}
\end{figure}

\subsection{Efficiency: Theory, Scaling, and Trade-offs}
\label{app:efficiency}

\begin{figure}[H]
    \centering
    \includegraphics[width=\textwidth]{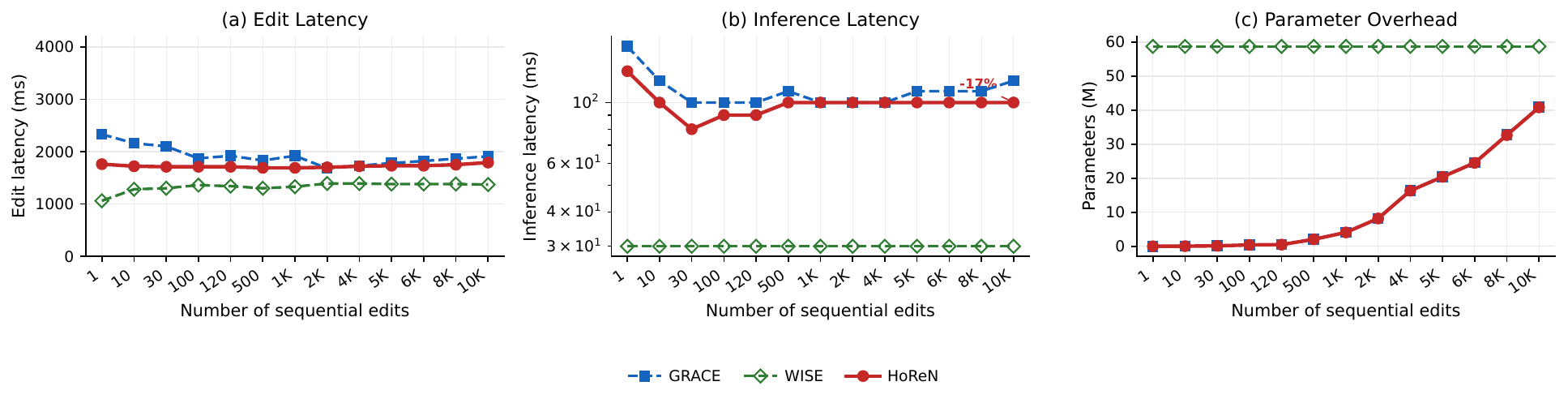}
    \caption{Efficiency scaling on ZsRE (LLaMA-3.1-8B) over $N\!\in\![1,10\text{K}]$ sequential edits: \textbf{(a)} per-edit latency, \textbf{(b)} per-query inference latency, and \textbf{(c)} cumulative trainable parameters of the side-memory overhead.}\label{fig:efficiency}
\end{figure}

\paragraph{Theoretical complexity.} When using direct tensor value as the codebook value adaptor, HoReN inherits GRACE's $O(Kd)$ retrieval cost over a codebook of size $K$ in dimension $d$~\citep{hartvigsen2023aging}, and adds a constant number of damped Hopfield refinement steps (softmax + matmul + normalization) per query. This refinement process is $O(Kd)$ per step and is performed only for a constant number of times, so HoReN's asymptotic cost is the same as GRACE's, which is confirmed by Figure~\ref{fig:efficiency}.

\paragraph{Editing efficiency.} On LLaMA-3.1-8B model, HoReN, GRACE, and WISE have similar editing time at around $1700$ ms, but on Qwen-SEA-LION-v4-32B (Table~\ref{tab:efficiency}), the gap widens substantially: HoReN edits in $4.85$s versus WISE at $17.37$s ($3.6\times$ faster) and AlphaEdit at $22.83$s ($4.7\times$ faster). Parameter-modifying baselines pay a steep activation-statistics / projection cost that scales with model width, while HoReN's per-edit cost is dominated by the small value adaptor and is largely insensitive to backbone size.

\paragraph{Inference-vs-memory trade-off.} WISE trades memory for latency in the opposite direction from HoReN: it pays a fixed $+58.72$M up front but routes inference through a small module ($30$ms). HoReN starts at $+4.09$M (1K edits) and grows by ${\sim}4.08$M per additional 1K edits, so its side-memory only catches up to WISE's at roughly $N\!\approx\!14$--$15$K edits, by which point WISE has already lost ${\geq}0.4$ on Generalization in our 50K-scaling study (Figure~\ref{fig:50k-scaling}). For the editing regimes we evaluate, HoReN therefore Pareto-dominates WISE on the (memory, generalization) frontier while remaining within ${\sim}5\%$ inference latency of GRACE.

\begin{table}[H]\scriptsize

\centering
\caption{Efficiency at $N{=}1000$ edits on ZsRE. ``Side mem.'' counts the editor's auxiliary parameters (codebook for HoReN/GRACE; dual-memory for WISE); ``$\Delta$ Model wts.'' counts changes to the backbone (AlphaEdit modifies the model in place, hence $0$ side-memory). Following the GRACE convention~\citep{hartvigsen2023aging}, side-memory excludes per-edit \emph{key} storage, which would roughly double the figure for HoReN/GRACE; AlphaEdit has no key storage, and WISE's reported value already includes its routing memory.}\label{tab:efficiency}
\begin{tabular}{llcccc}
\toprule
\textbf{Model} & \textbf{Method} & \textbf{Edit (ms)} & \textbf{Infer.\ (ms)} & \textbf{Side mem.\ (M)} & \textbf{$\Delta$ Model wts.\ (M)} \\
\midrule
\multirow{4}{*}{\begin{tabular}[c]{@{}l@{}}LLaMA-3.1\\8B\end{tabular}}
 & WISE      & 1670 & 30  & $+58.72$ & $0$\\
 & GRACE     & 1690 & 100 & $+4.10$  & $0$ \\
 & AlphaEdit & 3990 & 30  & $0$ & {$+293.60$} \\
 & \textit{HoReN} & \textbf{1670} & \textbf{100} & $\mathbf{+4.09}$ & {$\mathbf{0}$} \\
\midrule
\multirow{4}{*}{\begin{tabular}[c]{@{}l@{}}SEA-LION\\32B\end{tabular}}
 & WISE      & 17370 & 80  & $+131.07$ & {$0$} \\
 & GRACE     & 4800  & 330 & $+5.12$   & {$0$} \\
 & AlphaEdit & 22830 & 90  & {$0$} & {$+655.36$} \\
 & \textit{HoReN} & \textbf{4850} & \textbf{330} & $\mathbf{+5.09}$ & {$\mathbf{0}$} \\
\bottomrule
\end{tabular}
\end{table}

\section{Cross-Dataset Generalization}
\label{app:cross-dataset}

\subsection{Full Numerical Results}
\label{app:cross-dataset-tables}

Tables~\ref{tab:alphaedit}--\ref{tab:cross-dataset} report the complete numerical results underlying Figure~\ref{fig:cross-dataset} in Section~\ref{sec:expii-cross-dataset}.

\begin{table}[H]\scriptsize
\centering
\caption{Sequential editing on WikiBigEdit (LLaMA-3.1-8B). AlphaEdit's locality degrades sharply under long edit sequences while HoReN remains stable.}
\label{tab:alphaedit}
\begin{tabular}{clccc}
\toprule
\textbf{\#Edits} & \textbf{Method} & \textbf{Rel.} & \textbf{Gen.} & \textbf{Loc.} \\
\midrule
\multirow{2}{*}{500}
 & AlphaEdit & 0.825 & 0.605 & 0.267 \\
 & \textbf{HoReN} & \textbf{0.992} & \textbf{0.767} & \textbf{0.682} \\
\midrule
\multirow{2}{*}{1000}
 & AlphaEdit & 0.797 & 0.618 & 0.194 \\
 & \textbf{HoReN} & \textbf{0.987} & \textbf{0.753} & \textbf{0.654} \\
\midrule
\multirow{2}{*}{3000}
 & AlphaEdit & 0.799 & 0.651 & 0.169 \\
 & \textbf{HoReN} & \textbf{0.986} & \textbf{0.754} & \textbf{0.631} \\
\bottomrule
\end{tabular}
\end{table}

\begin{table}[H]\scriptsize
\centering
\caption{Unstructured knowledge editing on UnKE (LLaMA-3.1-8B) broken down by question type. Rows: three UnKEBench question types (\emph{Original}, \emph{Paraphrase}, \emph{Sub-question}) crossed with all applicable metrics. Columns: edit counts $N\!\in\!\{10,100,500\}$, each split between UnKE and HoReN. Best per (row, $N$) in bold; BLEU and BERTScore are not reported for the Sub-question split in the original UnKEBench protocol~\citep{Deng2024UnKEUK}.}
\label{tab:unke-breakdown}
\setlength{\tabcolsep}{4pt}
\begin{tabular}{ll cc cc cc}
\toprule
& & \multicolumn{2}{c}{$N{=}10$} & \multicolumn{2}{c}{$N{=}100$} & \multicolumn{2}{c}{$N{=}500$} \\
\cmidrule(lr){3-4} \cmidrule(lr){5-6} \cmidrule(lr){7-8}
\textbf{Q-type} & \textbf{Metric} & UnKE & HoReN & UnKE & HoReN & UnKE & HoReN \\
\midrule
\multirow{5}{*}{Original}
 & BLEU       & \textbf{0.191} & 0.143          & \textbf{0.202} & 0.126          & 0.098 & \textbf{0.117} \\
 & ROUGE-1    & 0.453          & \textbf{0.583} & 0.432          & \textbf{0.457} & 0.222 & \textbf{0.404} \\
 & ROUGE-2    & 0.240          & \textbf{0.395} & 0.196          & \textbf{0.219} & 0.054 & \textbf{0.184} \\
 & ROUGE-L    & 0.431          & \textbf{0.551} & 0.405          & \textbf{0.428} & 0.205 & \textbf{0.380} \\
 & BERTScore  & 0.713          & \textbf{0.814} & 0.698          & \textbf{0.716} & 0.183 & \textbf{0.682} \\
\midrule
\multirow{5}{*}{Paraphrase}
 & BLEU       & \textbf{0.204} & 0.126          & \textbf{0.199} & 0.115          & 0.097 & \textbf{0.106} \\
 & ROUGE-1    & 0.452          & \textbf{0.476} & \textbf{0.418} & 0.402          & 0.215 & \textbf{0.360} \\
 & ROUGE-2    & 0.211          & \textbf{0.278} & \textbf{0.178} & 0.165          & 0.047 & \textbf{0.142} \\
 & ROUGE-L    & 0.443          & \textbf{0.452} & \textbf{0.392} & 0.374          & 0.197 & \textbf{0.338} \\
 & BERTScore  & \textbf{0.756} & 0.707          & 0.676          & \textbf{0.685} & 0.166 & \textbf{0.659} \\
\midrule
\multirow{5}{*}{Sub-question}
 & ROUGE-1    & \textbf{0.469} & 0.424          & 0.309          & \textbf{0.373} & 0.141 & \textbf{0.338} \\
 & ROUGE-2    & \textbf{0.176} & 0.168          & 0.071          & \textbf{0.123} & 0.016 & \textbf{0.100} \\
 & ROUGE-L    & \textbf{0.459} & 0.417          & 0.303          & \textbf{0.364} & 0.137 & \textbf{0.332} \\
\bottomrule
\end{tabular}
\end{table}

\noindent\textbf{Column / row semantics (UnKEBench~\citep{Deng2024UnKEUK}).}
For each unstructured edit text $A$, UnKEBench provides three question types probing complementary aspects of the edited model's behavior. The \textbf{Original} block measures \emph{reliability}: whether the edit is faithfully recalled on the original question $Q$. The \textbf{Paraphrase} block measures \emph{generalization}: whether the edit transfers to a semantically equivalent rewording $Q_p$ of $Q$. The \textbf{Sub-question} block measures \emph{factual coverage}: whether individual knowledge entities within the passage are correctly installed. \textbf{BLEU} and \textbf{ROUGE-1/2/L} are word-level $n$-gram overlap scores; \textbf{BERTScore} captures semantic similarity beyond lexical match. BLEU and BERTScore are not reported for the Sub-question split in the original UnKEBench protocol.

\begin{table}[H]\scriptsize
\centering
\caption{Cross-dataset generalization on WikiBigEdit and ZsRE at $N{=}1000$ edits (DeepSeek-R1-Distill-Qwen-1.5B). Gray superscripts show HoReN's absolute improvements over each baseline.}
\label{tab:cross-dataset}
\begin{tabular}{lcc|cc}
\toprule
\multirow{2}{*}{\textbf{Method}}
& \multicolumn{2}{c|}{\textbf{WikiBigEdit}}
& \multicolumn{2}{c}{\textbf{ZsRE}} \\
\cmidrule(lr){2-3} \cmidrule(lr){4-5}
& \textbf{Rel.} & \textbf{Gen.} & \textbf{Rel.} & \textbf{Gen.} \\
\midrule
GRACE  & 0.63\textsuperscript{\textcolor{gray}{\textbf{\scriptsize +0.33}}} & 0.07\textsuperscript{\textcolor{gray}{\textbf{\scriptsize +0.72}}} & 0.83\textsuperscript{\textcolor{gray}{\textbf{\scriptsize +0.11}}} & 0.05\textsuperscript{\textcolor{gray}{\textbf{\scriptsize +0.74}}} \\
WISE   & 0.47\textsuperscript{\textcolor{gray}{\textbf{\scriptsize +0.49}}} & 0.38\textsuperscript{\textcolor{gray}{\textbf{\scriptsize +0.41}}} & 0.49\textsuperscript{\textcolor{gray}{\textbf{\scriptsize +0.45}}} & 0.47\textsuperscript{\textcolor{gray}{\textbf{\scriptsize +0.32}}} \\
REPAIR & 0.58\textsuperscript{\textcolor{gray}{\textbf{\scriptsize +0.38}}} & 0.54\textsuperscript{\textcolor{gray}{\textbf{\scriptsize +0.25}}} & 0.59\textsuperscript{\textcolor{gray}{\textbf{\scriptsize +0.35}}} & 0.57\textsuperscript{\textcolor{gray}{\textbf{\scriptsize +0.22}}} \\
\midrule
\textbf{HoReN}  & \textbf{0.96} & \textbf{0.79} & \textbf{0.94} & \textbf{0.79} \\
\bottomrule
\end{tabular}
\end{table}

\section{Analysis and Ablation Details}
\label{app:analysis-ablation}

\subsection{Routing Ingredients: GRACE $\to$ GRACE+Norm. $\to$ HoReN}
\label{app:grace-vs-horen}

Table~\ref{tab:grace-vs-horen} isolates the contribution of each routing ingredient. We fix the prompt-token pooling strategy across methods: queries are obtained by averaging the hidden states of the last 60\% of prompt tokens for both GRACE and HoReN. This controls for pooling choice, which would otherwise act as a confounding factor.  Table~\ref{tab:normalized-codebook} confirms that applying Hopfield updates \emph{without} normalization collapses locality, motivating the joint design.

\begin{table}[H]\scriptsize
\centering
\caption{Routing ingredient ablation: unnormalized GRACE vs.\ GRACE+Norm. vs.\ HoReN (ZsRE, LLaMA-3.1-8B).}
\label{tab:grace-vs-horen}
\begin{tabular}{lccccc}
\toprule
\textbf{N} & \textbf{Method} & \textbf{Rel.} & \textbf{Gen.} & \textbf{Loc.} & \textbf{OP} \\
\midrule
\multirow{3}{*}{$N=100$}
 & \textbf{HoReN} & \textbf{0.99} & \textbf{0.88} & \textbf{1.00} & \textbf{0.96} \\
 & GRACE+Norm. & 1.0 & 0.41 & 1.0 & 0.75 \\
 & GRACE & 1.0 & 0.06 & 1.0 & 0.40 \\
\midrule
\multirow{3}{*}{$N=500$}
 & \textbf{HoReN} & \textbf{1.00} & \textbf{0.92} & \textbf{1.00} & \textbf{0.97} \\
 & GRACE+Norm. & 1.0 & 0.37 & 1.0 & 0.72 \\
 & GRACE & 1.0 & 0.06 & 1.0 & 0.39 \\
\midrule
\multirow{3}{*}{$N=1000$}
 & \textbf{HoReN} & \textbf{0.99} & \textbf{0.93} & \textbf{0.99} & \textbf{0.97} \\
 & GRACE+Norm. & 1.0 & 0.33 & 1.0 & 0.69 \\
 & GRACE & 1.0 & 0.07 & 1.0 & 0.41 \\
\bottomrule
\end{tabular}
\end{table}

\begin{table}[H]\scriptsize
\centering
\caption{Effect of normalization on Hopfield retrieval (ZsRE, LLaMA-3.1-8B). Without normalization, the Hopfield step collapses locality.}
\label{tab:normalized-codebook}
\begin{tabular}{lccccc}
\toprule
\textbf{N} & \textbf{Method} & \textbf{Rel.} & \textbf{Gen.} & \textbf{Loc.} & \textbf{OP} \\
\midrule
\multirow{2}{*}{$N=100$}
 & \textbf{Normalized} & \textbf{0.99} & \textbf{0.88} & \textbf{1.00} & \textbf{0.96} \\
 & Unnormalized & 1.00 & 0.92 & 0.08 & 0.43 \\
\midrule
\multirow{2}{*}{$N=500$}
 & \textbf{Normalized} & \textbf{1.00} & \textbf{0.92} & \textbf{1.00} & \textbf{0.97} \\
 & Unnormalized & 0.98 & 0.79 & 0.07 & 0.38 \\
\midrule
\multirow{2}{*}{$N=1000$}
 & \textbf{Normalized} & \textbf{0.99} & \textbf{0.93} & \textbf{0.99} & \textbf{0.97} \\
 & Unnormalized & 0.96 & 0.75 & 0.07 & 0.37 \\
\bottomrule
\end{tabular}
\end{table}

\subsection{Hopfield Iteration Steps}
\label{app:hopfield-iter}

\begin{figure}[H]
\centering
\includegraphics[width=\textwidth]{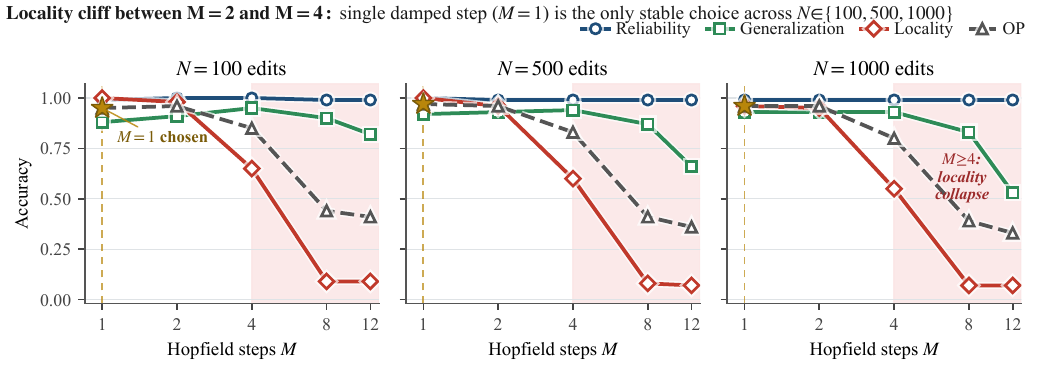}
\caption{Effect of Hopfield refinement steps $M$ on the four editing metrics across edit counts $N\!\in\!\{100,500,1000\}$ (LLaMA-3.1-8B, ZsRE). Reliability is essentially flat in $M$; Generalization peaks at $M\!\in\![1,4]$; Locality cliffs from ${\sim}0.95$ to ${\sim}0.6$ between $M{=}2$ and $M{=}4$ and bottoms out near $0.07$ for $M\!\geq\!8$, dragging OP with it. The single damped step ($M{=}1$, gold star) is the only value of $M$ that keeps all three metrics jointly stable across all three scales.}
\label{fig:hopfield-steps}
\end{figure}


\subsection{Value Adaptor: Direct Vector vs.\ LoRA}
\label{app:lora-vs-value}

Direct-vector and LoRA value adaptors achieve identical editing performance; the direct variant is preferred for its $8{\times}$ lower parameter cost ($+4.09$M parameters versus $+32.64$M at $N{=}1000$ ). Table~\ref{tab:lora-vs-value-app} compares the two across $N{\in}\{100,500,1000\}$.

Both variants reach OP $0.95$--$0.98$ at every scale, with differences of at most $0.01$. The equivalence is expected: HoReN's retrieval accuracy depends entirely on the key-matching mechanism (normalization + Hopfield step), not on the capacity of the value payload. Once the correct edit is retrieved, either adaptor has sufficient expressivity to represent the target answer at these scales. The direct variant adds $+4.09$M parameters at $N{=}1000$ versus $+32.64$M for LoRA, an $8{\times}$ gap with no performance benefit. We therefore recommend direct-vector as the default; LoRA remains available for deployments requiring strict parameter isolation between edits (e.g., federated or multi-tenant settings).

\begin{table}[H]\scriptsize
\centering
\caption{Direct value vs.\ LoRA ($r{=}4$) on ZsRE, LLaMA-3.1-8B. Performance is identical; direct variant adds $8\times$ fewer parameters.}
\label{tab:lora-vs-value-app}
\begin{tabular}{lccccc}
\toprule
\textbf{N} & \textbf{Method} & \textbf{Rel.} & \textbf{Gen.} & \textbf{Loc.} & \textbf{OP} \\
\midrule
\multirow{2}{*}{$N=100$}  & Value & 0.99 & 0.88 & 1.00 & 0.96 \\
                          & LoRA  & 0.99 & 0.87 & 1.00 & 0.95 \\
\midrule
\multirow{2}{*}{$N=500$}  & Value & 1.00 & 0.92 & 1.00 & 0.97 \\
                          & LoRA  & 1.00 & 0.92 & 1.00 & 0.97 \\
\midrule
\multirow{2}{*}{$N=1000$} & Value & 0.99 & 0.93 & 0.99 & 0.97 \\
                          & LoRA  & 1.00 & 0.93 & 1.00 & 0.98 \\
\bottomrule
\end{tabular}
\end{table}

\subsection{Key/Query Representation: Token Pooling Ratio}
\label{app:token-selection}

\textbf{The last 60\% suffix pooling ratio is Pareto-optimal across all scales; performance is non-monotonic, with both short and long suffixes introducing distinct failure modes.} Figure~\ref{fig:token-selection} sweeps ten ratios from 10\% to 100\% on LLaMA-3.1-8B, ZsRE, at $N{\in}\{100,500,1000\}$.

\emph{Short suffixes ($\leq$30\%) — generalization failure.} Very short suffixes pool too few tokens to capture the semantic content of the prompt, producing a query representation that reflects surface position rather than meaning. Gen.\ stays at $0.47$--$0.67$ at $N{=}1000$ regardless of $N$: the key constructed from the original prompt's last few tokens is insufficiently similar to the key from the paraphrase's last few tokens, so the Hopfield step cannot bridge the residual angular gap.

\emph{Long suffixes ($\geq$80\%) — locality failure.} Very long suffixes include tokens specific to the prompt's preamble or instruction format, making the query representation sensitive to surface-level prompt differences rather than semantic content. This causes locality collapse: at $N{=}1000$, Loc.\ drops to $0.74$--$0.84$ as unrelated locality queries, which share similar preambles with edit queries, begin triggering false matches.

\emph{Last 60\% — Pareto-optimal.} The 60\% suffix captures enough semantic context for robust paraphrase matching (Gen.\ $0.88$--$0.93$) while excluding enough of the prompt-specific prefix to keep locality queries well-separated (Loc.\ $0.99$--$1.00$). The $50$--$70\%$ range all reach OP ${\geq}0.95$, indicating the choice is not sensitive to exact calibration. We treat the 60\% default as an empirical heuristic; learned token selection (e.g., attention-weighted pooling) is a concrete future direction.

\begin{figure}[H]
\centering
\includegraphics[width=\textwidth]{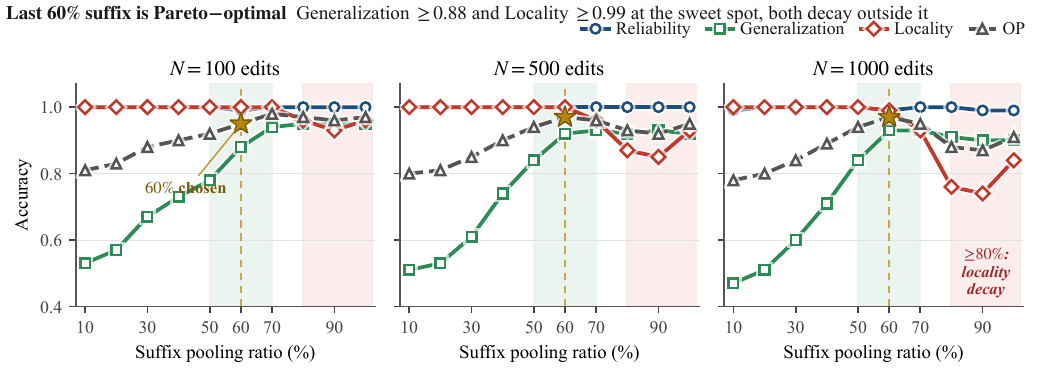}
\caption{Effect of suffix pooling ratio on the four editing metrics across $N\!\in\!\{100,500,1000\}$ (LLaMA-3.1-8B, ZsRE). Reliability is essentially flat at $1.00$; Generalization grows monotonically from ${\sim}0.50$ at $10\%$ to ${\sim}0.93$ from $60\%$ onward; Locality is held at $1.00$ up to $60\%$ then decays as longer suffixes start incorporating prompt-specific tokens, dropping to $0.74\!-\!0.84$ at $\geq 80\%$ for $N{=}1000$. The $50\%\!-\!70\%$ Pareto band (green) all reach OP $\geq 0.94$; the $\geq 80\%$ locality-decay zone (red) cuts OP. The chosen $60\%$ setting (gold star) is the only ratio that simultaneously achieves Generalization $\geq 0.88$ and Locality $\geq 0.99$ at every scale.}
\label{fig:token-selection}
\end{figure}

\end{document}